\documentclass[twoside,11pt]{article}

\usepackage{jmlr2e}
\usepackage{lastpage}

\usepackage{amsfonts,mathtools}
\usepackage{booktabs,array}
\usepackage{enumitem}
\usepackage{xcolor}
\usepackage{tikz}
\usetikzlibrary{arrows.meta,positioning,fit,shapes.geometric,calc}
\hypersetup{hidelinks}
\newcolumntype{L}[1]{>{\raggedright\arraybackslash}p{#1}}

\newcommand{\R}{\mathbb{R}}

\newcommand{\E}{\mathbb{E}}
\newcommand{\Prob}{\mathbb{P}}
\newcommand{\calX}{\mathcal{X}}

\newcommand{\calP}{\mathcal{P}}
\newcommand{\calQ}{\mathcal{Q}}
\newcommand{\weakto}{\rightharpoonup}

\newcommand{\calF}{\mathcal{F}}

\newcommand{\calH}{\mathcal{H}}
\newcommand{\calS}{\mathcal{S}}
\newcommand{\calC}{\mathcal{C}}

\newcommand{\calL}{\mathcal{L}}
\newcommand{\calR}{\mathcal{R}}

\newcommand{\diam}{\mathrm{diam}}
\newcommand{\dist}{\mathrm{dist}}

\newcommand{\esssup}{\operatorname*{ess\,sup}}
\newcommand{\essinf}{\operatorname*{ess\,inf}}

\newcommand{\D}{\mathsf{D}}
\newcommand{\W}{\mathsf{W}}

\newcommand{\CROTS}{\mathsf{CROTS}}
\newcommand{\eps}{\varepsilon}
\newcommand{\dd}{\,\mathrm{d}}
\newcommand{\soft}{\mathrm{soft}}
\newcommand{\hard}{\mathrm{hard}}

\newtheorem{assumption}[theorem]{Assumption}

\jmlrheading{1}{2026}{1-\pageref{LastPage}}{May 2026}{--}{26-0000}{Lei Luo and Jian Yang}
\ShortHeadings{Conditional Random Ordered Transport Spaces}{Luo and Yang}
\firstpageno{1}

\begin{document}

\title{Conditional Random Ordered Transport Spaces}

\author{\name Lei Luo \email cslluo@njust.edu.cn \\
\addr PCA Lab, Key Lab of Intelligent Perception and Systems for High-Dimensional Information of Ministry of Education\\
School of Computer Science and Engineering\\
Nanjing University of Science and Technology\\
Nanjing, China
\AND
\name Jian Yang \email csjyang@njust.edu.cn \\
\addr PCA Lab, Key Lab of Intelligent Perception and Systems for High-Dimensional Information of Ministry of Education\\
School of Computer Science and Engineering\\
Nanjing University of Science and Technology\\
Nanjing, China}

\editor{To be assigned}

\maketitle
\begin{abstract}%
A small Wasserstein distance does not certify that a transformation is admissible. In evidence-constrained, semantic, causal, physical, monotone, or risk-sensitive learning, one must ask not only how far two probability laws are, but whether mass has moved in a direction allowed by available information. We introduce conditional random ordered transport spaces (CROTS), a class of \(L^0\)-valued spaces of random probability measures equipped with a Wasserstein ambient metric, a closed stochastic order, hard and soft ordered transport discrepancies, and a conditional risk functional for evaluating order violation under an evidence sigma-field. The central object is an order-admissible transport geometry for random measure-valued dynamics, distinct from cone-valued metrics, ordered Kantorovich constructions, random Wasserstein spaces alone, and model-specific residuals for generative paths. We develop the foundations of CROTS as a space theory for reliable distributional learning. The results include well-posedness and duality for hard and soft ordered transport, soft-to-hard variational convergence, measurability and completeness of the random lifted space, reductions to classical Wasserstein and ordered geometries, ordered geodesics, constrained barycenters and projections, conditional risk-transport duality, and separation of order-violating distributions. The main stability theorem shows that random learning dynamics may converge in the ambient Wasserstein metric while its local admissibility leakage follows a separate conditional order-risk recursion. The resulting asymptotic order-risk floor provides a mathematical language for evidence overreach, ordered distribution shift, robustness failure, and admissible distributional dynamics.
\end{abstract}

\begin{keywords}
Optimal transport, Wasserstein geometry, stochastic order, random metric spaces, conditional risk measures, fixed point theory, robust learning, evidence constraints.
\end{keywords}

\section{Introduction}
\label{sec:intro}

Machine learning often advances by changing the ambient mathematical space.  Linear prediction is organized by Hilbert and Banach spaces.  Kernel methods use reproducing kernel Hilbert spaces and their Banach or native-space variants, and distributional learning became geometric through Wasserstein spaces \citep{zhang2009rkbs,emery2025native}.  Recent JMLR work has continued this space-driven line through Wasserstein projections, conditional Wasserstein distances, functional optimal transport, and sliced-Wasserstein flows on non-Euclidean spaces \citep{gunsilius2024tangential,chemseddine2025conditional,zhu2024functional,bonet2025sliced}.  We argue that reliable distributional learning requires another change of language: a space in which probability laws are random, transport has admissible directions, and operator stability is evaluated conditionally on evidence or environment.

Classical Wasserstein geometry asks how far two probability measures are.  It gives metrics, geodesics, barycenters, displacement convexity, and gradient flows \citep{villani2003topics,villani2009optimal,ambrosio2008gradient,santambrogio2015optimal}.  But distance alone does not answer a different and often more important question: \emph{is the transport admissible?}  A small displacement may still cross an evidence boundary, violate a semantic monotonicity constraint, break a causal order, or move mass in a physically impossible direction.  Conversely, a large displacement may be safe if it follows the permitted order.  This phenomenon is invisible to ordinary scalar transport cost.

The missing question appears across reliable learning.  In medical decision systems, an output should not move outside patient-evidence and disease-progression constraints.  In robust image regression and face recognition, low-rank matrix regression and matrix-variate noise models have already shown that occlusion, illumination, and dependent noise are better handled as structured perturbations than as independent scalar errors \citep{yang2017nuclear,luo2017robust}.  In representation learning, an update should not destroy task-relevant order.  In robust learning, contamination may be harmless when it lies in a task-admissible direction but harmful when it crosses an order boundary.  In multimodal reasoning, an output should not exceed the evidence cone generated by observations and side information.  These phenomena are not adequately described by the statement $W_p(\mu,\nu)$ is small.  They require a geometry of \emph{ordered admissible transport}.

The central thesis of the paper is the following.
\begin{quote}
\emph{Reliable distributional learning is not merely minimization of transport distance.  It is stability of random measure-valued operators under conditionally admissible ordered transport.}
\end{quote}
This thesis leads to a mathematical object that we call a \emph{conditional random ordered transport space}.  The word ``conditional'' is essential: admissibility is usually evaluated relative to evidence, environment, task context, sensor state, or patient information.  The word ``random'' is essential: data laws, transformations, and environments are stochastic.  The word ``ordered'' is essential: not every low-cost movement is legitimate.  The word ``transport'' is essential: the objects are probability measures and the geometry is induced by couplings.

\paragraph{CROTS at a glance.}
The formal definition appears in Section~\ref{sec:CROTS}.  Informally, a CROTS is the structured space
\[
\boxed{
\CROTS_{p,q,\lambda,\rho}
=\big(\calS_{p,q},\mathbf W_{p,q},\preceq_{\mathrm{a.s.}},
\D^{0,\soft}_{p,\lambda,\preceq},\calR^{\preceq}_{\lambda,\rho}\big),
}
\]
where \(\calS_{p,q}\) is a space of random probability measures, \(\mathbf W_{p,q}\) is the complete ambient random Wasserstein metric, \(\preceq_{\mathrm{a.s.}}\) is an almost-sure stochastic order, \(\D^{0,\soft}_{p,\lambda,\preceq}\) is an \(L^0\)-valued ordered transport discrepancy, and \(\calR^{\preceq}_{\lambda,\rho}\) is the conditional risk used to evaluate order violation.  This early display is intended to make the new ambient space visible before the individual components are developed.

\paragraph{Guiding example.}
Let $\calX=\R^d$ and let $K\subseteq \R^d$ be a closed convex cone.  Define
\[
        x\preceq_K y \quad \Longleftrightarrow \quad y-x\in K .
\]
The cone may represent a semantic direction, a disease-progression direction, a risk-increasing direction, or an evidence-admissible direction.  Given distributions $\mu,\nu\in\calP_p(\R^d)$, ordinary Wasserstein transport searches over all couplings $\pi\in\Pi(\mu,\nu)$.  Ordered transport searches only over couplings satisfying $x\preceq_K y$ almost surely.  The hard ordered cost is therefore directed and may be infinite in one direction.  Exact admissibility is often too rigid for learning, so we also introduce a soft penalty that measures distance to the order graph.  A central theorem below shows that this soft problem converges to hard ordered transport as the penalty diverges.  Thus the soft penalty is a variational relaxation of exact admissibility, not an ad hoc add-on.

\paragraph{How to read the paper.}
Readers who want the conceptual picture should begin with Figure~\ref{fig:framework} and Table~\ref{tab:theory-map}.  The figure shows how four existing languages--Wasserstein geometry, stochastic order, random metric theory, and conditional risk--enter CROTS.  The table gives a map of the theoretical blocks, what each block proves, and what each result means.  The main text gives the definitions, statements, and explanations; the appendices contain proof details.  This layout follows the space-driven style common in JMLR papers on RKBS, native spaces, Wasserstein projections, and conditional Wasserstein geometry \citep{zhang2009rkbs,emery2025native,gunsilius2024tangential,chemseddine2025conditional}.

\begin{figure}[t]
\centering
\resizebox{\linewidth}{!}{%
\begin{tikzpicture}[
  font=\small,
  box/.style={draw, rounded corners=2pt, align=center, minimum height=0.72cm, inner xsep=5pt, inner ysep=4pt},
  source/.style={box, fill=blue!7, minimum width=3.3cm},
  core/.style={box, fill=green!10, minimum width=4.25cm, minimum height=0.9cm, very thick},
  mid/.style={box, fill=green!4, minimum width=4.25cm},
  output/.style={box, fill=orange!10, minimum width=3.65cm},
  arr/.style={-Latex, thick}
]

\node[source] (w) at (0,2.25) {Wasserstein geometry\\$W_p$, couplings, geodesics};
\node[source] (o) at (0,0.75) {Stochastic order\\ordered couplings};
\node[source] (r) at (0,-0.75) {Random metric theory\\$L^0$-valued distances};
\node[source] (c) at (0,-2.25) {Conditional risk\\$\rho(\cdot\mid\calH)$};

\node[core] (crots) at (5.2,0.95) {\textbf{CROTS}\\random probability measures\\with admissible transport};
\node[mid] (metric) at (5.2,-0.45) {complete ambient metric\\$\mathbf W_{p,q}$};
\node[mid] (risk) at (5.2,-1.55) {directed order-risk\\$\D^0_{p,\lambda,\preceq}$ and $\calR^{\preceq}_{\lambda,\rho}$};

\node[output] (dual) at (10.6,2.25) {duality and\\soft-to-hard limit};
\node[output] (geo) at (10.6,0.75) {ordered geodesics,\\barycenters, projections};
\node[output] (fp) at (10.6,-0.75) {risk-stable\\operator dynamics};
\node[output] (learn) at (10.6,-2.25) {reliable learning\\evidence overreach, shift};

\draw[arr] (w.east) -- (crots.west);
\draw[arr] (o.east) -- (crots.west);
\draw[arr] (r.east) -- (risk.west);
\draw[arr] (c.east) -- (risk.west);
\draw[arr] (crots.east) -- (dual.west);
\draw[arr] (crots.east) -- (geo.west);
\draw[arr] (risk.east) -- (fp.west);
\draw[arr] (risk.east) -- (learn.west);
\draw[arr] (metric.east) -- (fp.west);

\node[draw, dashed, rounded corners=4pt, fit=(crots)(metric)(risk), inner sep=0.22cm] {};
\end{tikzpicture}%
}
\caption{Conceptual architecture of CROTS.  Existing theories contribute ingredients, but the new space separates the complete ambient metric used for convergence from the directed order-risk functionals used for admissibility.}
\label{fig:framework}
\end{figure}

\begin{table}[t]
\centering
\caption{Theory map of the paper.  The table is intended as a compact guide to the role of each theoretical block.}
\label{tab:theory-map}
\small
\begin{tabular}{L{0.18\linewidth} L{0.24\linewidth} L{0.25\linewidth} L{0.22\linewidth}}
\toprule
Block & Main object & Main result & What it expresses \\
\midrule
Ordered transport & $\Pi_{\preceq}$, $\W^{\hard}_{p,\preceq}$ & Existence, closedness, directed triangle & Reachability is different from distance \\
Soft violation & $\W^{\soft}_{p,\lambda,\preceq}$, $V_{\preceq}$ & Soft-to-hard convergence & Differentiable penalties approximate exact admissibility \\
Non-redundancy & Dirac and small-shift examples & Wasserstein closeness cannot certify admissibility & Ordinary Wasserstein geometry misses order violation \\
Random lift & $\mu:\Omega\to\calP_p(\calX)$ & Measurability and completeness of $\calS_{p,q}$ & Ordered transport distances are retained as random variables \\
CROTS & $(\calS_{p,q},\mathbf W_{p,q},\D^0,\calR)$ & Reduction theorem and risk topology & The new space contains Wasserstein, ordered, random, and Dirac layers \\
Geometry & ordered geodesics, barycenters, projections & Existence under order-compatible assumptions & The space supports geometric objects, not only costs \\
Operators & random measure-valued maps & Fixed points and order-risk leakage bounds & Local admissibility leakage gives a global risk floor \\
Learning consequences & evidence sets and shifts & separation and generalization bounds & Hallucination, robustness, and shift become order-risk phenomena \\
\bottomrule
\end{tabular}
\end{table}

\paragraph{Contribution.}
This paper first constructs the deterministic ordered transport layer.  Starting from a closed ordered Polish space $(\calX,d,\preceq)$, we define ordered couplings, hard directed ordered transport, a soft order-penalized transport problem, and a violation functional.  We prove existence of optimal plans, order-closedness, a directed triangle inequality on comparable triples, Kantorovich duality, and soft-to-hard variational convergence.  These results clarify the basic message that admissibility is not a topological consequence of small Wasserstein distance.

The second contribution is the space itself.  We lift ordered transport to random probability measures and introduce $L^0$-valued ordered transport discrepancies.  These random discrepancies are then evaluated through conditional risk functionals, yielding the conditional random ordered transport space
\[
\CROTS_{p,q,\lambda,\rho}
  = (\calS_{p,q},\mathbf W_{p,q},\preceq_{\mathrm{a.s.}},
     \D^{0,\soft}_{p,\lambda,\preceq},\calR^{\preceq}_{\lambda,\rho}).
\]
The complete metric $\mathbf W_{p,q}$ controls convergence and fixed points, while the directed $L^0$-valued discrepancy and its conditional risk control admissibility.  This separation is central: a directed admissibility risk is not forced to be a metric.

The third contribution is the operator theory and its learning interpretation.  We prove ordered geodesic, barycenter, and projection results, then establish fixed point theorems for order-preserving and approximately order-preserving random measure-valued operators.  The main stability principle says that if local conditional order-risk leakage is persistent, then it creates an explicit asymptotic floor in the order-risk process, even when the ambient Wasserstein error is controlled separately.  This gives a mathematical language for evidence overreach, ordered distribution shift, and robustness failure.

\section{Background and Positioning}
\label{sec:background}

This section places the paper relative to the closest literatures and fixes notation.  The point of the related work discussion is not to claim that order, randomness, risk, or fixed points are new separately.  It is to explain why their combination in a single transport space is not already covered by any one of those literatures.

Optimal transport gives the ambient geometry of probability measures \citep{villani2009optimal,santambrogio2015optimal,peyre2019computational}.  Its variants have become useful in metric learning and representation learning, including smoothed Wasserstein metric learning and data-adaptive spherical sliced-Wasserstein distances \citep{xu2018multi,zhang2025dssw}.  Stochastic order compares probability measures through monotone functions or order-supported couplings \citep{strassen1965existence,shaked2007stochastic,muller2002stochastic}.  The closest ordered-Wasserstein predecessor is \citet{fritz2020stochastic}, who study the ordered Kantorovich monad and the lifting of order from a metric space to probability measures.  Wasserstein projection theory in stochastic order, for example \citet{kim2024backward}, develops deep projection results onto order cones.  Our work uses ordered couplings but studies a different object: random measure-valued operators, soft order violation, conditional risk, and long-run stability.

Random metric theory provides the second ingredient.  In random normed and random locally convex modules, distances and norms are naturally $L^0$-valued quantities, and conditional risk measures are a major application area \citep{guo2010random,guo2011recent}.  We import this lesson into transport geometry: the ordered transport discrepancy between two random probability measures is kept as a random variable before being evaluated by conditional risk.  This differs from random Wasserstein frameworks, such as the recent $L^2$ over Wasserstein viewpoint of \citet{passeggeri2026l2}, which model uncertainty of random probability measures but do not include admissible order, violation penalties, or order-risk operator stability.

The cone example in the introduction might suggest cone metric spaces, but our use of order is different.  Cone metric spaces replace real-valued distances by cone-valued distances \citep{huang2007cone}.  Many fixed point statements in that literature can be scalarized or metrized \citep{asadi2011metrizability}.  CROTS does not use cones as value spaces for distance.  It uses orders or cones to constrain transport directions.  This is why the theory produces directed reachability, violation penalties, ordered geodesics, and order-risk stability rather than another cone-valued fixed point theorem.  Order-preserving Wasserstein distances for sequence matching \citep{su2017order,su2019order} are also related, but they are task-specific algorithmic constructions rather than an $L^0$-valued conditional random transport space.

Finally, this paper is intentionally distinct from two other lines of work.  It is not a theory of self-consistent generative paths, random regularized transport corrections, path residuals, model reductions, or adaptive sampling.  Recent image editing, image-to-image translation, attention-based editing, and rectified-flow works study such model-specific mechanisms and how they shape concrete generative paths \citep{dai2025erddci,zhang2025dcnot,yu2025ssaim,dai2025vrfno}.  Those questions motivate some applications of admissibility, but they are not the object studied here.  The present paper is also not deterministic cone-compatible Monge geometry: it does not characterize Mahalanobis cone acuteness or cone-chain quantile formulas.  Those are deterministic closed-form OT questions.  The present paper is a space-theoretic and operator-theoretic contribution.

\begin{table}[t]
\centering
\caption{Boundary with the closest literature.  The comparison is by primary object, not by a checklist of superiority.}
\label{tab:boundary}
\small
\begin{tabular}{L{0.23\linewidth} L{0.31\linewidth} L{0.36\linewidth}}
\toprule
Closest line & Primary object & Difference of CROTS \\
\midrule
Ordered Kantorovich / stochastic order & lifting an order to probability measures & adds random measure-valued spaces, soft violation, conditional risk, and operator stability \\
Random Wasserstein & random probability measures under Wasserstein uncertainty & adds admissible order, directed discrepancy, and conditional risk \\
Cone metric fixed points & cone-valued distance functions & order constrains couplings and directions, not the value space of a metric \\
Stochastic-order projections & projection onto measure-level order cones & projection is one object; the main theory is a full random ordered transport space \\
Order-preserving sequence OT & temporal matching regularizer & closed orders and conditional random spaces, not sequence-specific matching \\
Path self-consistency methods & corrected generative paths and path residuals & does not use path-space correction operators, path residuals, or model-specific reduction taxonomies \\
Cone-compatible Monge geometry & deterministic cone-cost compatibility and closed forms & no cone-chain quantile formula; the focus is random, conditional, and operator-theoretic \\
\bottomrule
\end{tabular}
\end{table}

Let $(\calX,d)$ be a Polish metric space and $p\in[1,\infty)$.  Denote by $\calP_p(\calX)$ the set of Borel probability measures with finite $p$th moment.  The Wasserstein distance is
\[
 W_p(\mu,\nu)^p=\inf_{\pi\in\Pi(\mu,\nu)}\int d(x,y)^p\dd\pi(x,y),
\]
where $\Pi(\mu,\nu)$ is the set of couplings of $\mu$ and $\nu$.

A partial order $\preceq$ on $\calX$ has graph
\[
        G_{\preceq}=\{(x,y)\in\calX\times\calX:x\preceq y\}.
\]
The order is called closed if $G_{\preceq}$ is closed in $\calX\times\calX$.

\begin{assumption}[Standing ordered transport assumptions]
\label{ass:standing}
Throughout Sections~\ref{sec:deterministic}-\ref{sec:operators}, $(\calX,d)$ is Polish, $p\in[1,\infty)$, and $\preceq$ is a closed partial order on $\calX$.  Products are equipped with the sum metric
\[
  d_2((x,y),(x',y'))=d(x,x')+d(y,y').
\]
The order-violation function is
\[
  \eta_{\preceq}(x,y)=\dist_{d_2}((x,y),G_{\preceq}).
\]
\end{assumption}

Because the order is reflexive, $(x,x)\in G_{\preceq}$.  Therefore
\[
   \eta_{\preceq}(x,y) \le d_2((x,y),(x,x))=d(x,y),
\]
and consequently
\[
   d(x,y)^p \le c_\lambda(x,y) \le (1+\lambda)d(x,y)^p
\]
for every finite $\lambda\ge0$.  This elementary bound is used throughout the paper to keep the soft ordered cost within the usual finite-moment regime.  The closedness assumption is the minimal topological assumption needed for order to survive limits of couplings; it is the reason stochastic dominance remains closed under Wasserstein convergence later in the paper.

\section{Ordered Transport}
\label{sec:deterministic}

This section develops the deterministic fiber of the theory.  It is the ordered counterpart of the usual optimal transport problem on $\calP_p(\calX)$.  The random and conditional layers in the next section are built by applying these deterministic constructions pointwise in $\omega$.

\begin{definition}[Ordered coupling and induced stochastic order]
For $\mu,\nu\in\calP_p(\calX)$ define
\[
     \Pi_{\preceq}(\mu,\nu)=\{\pi\in\Pi(\mu,\nu):\pi(G_{\preceq})=1\}.
\]
We write $\mu\preceq_{\mathrm{st}}\nu$ if $\Pi_{\preceq}(\mu,\nu)$ is nonempty.
\end{definition}

This is the measure-level order induced by moving mass only along admissible directions.  It coincides with familiar stochastic-order formulations under standard hypotheses, but the coupling formulation is the one needed for transport geometry.

\begin{definition}[Hard and soft ordered transport]
For $\mu,\nu\in\calP_p(\calX)$ define the hard directed ordered transport value
\[
  \W^{\hard}_{p,\preceq}(\mu,\nu)^p
  =\inf_{\pi\in\Pi_{\preceq}(\mu,\nu)}\int d(x,y)^p\dd\pi(x,y),
\]
with value $+\infty$ if $\Pi_{\preceq}(\mu,\nu)=\varnothing$.  For $\lambda\ge0$, define
\[
  c_\lambda(x,y)=d(x,y)^p+\lambda\eta_{\preceq}(x,y)^p,
  \qquad
  \W^{\soft}_{p,\lambda,\preceq}(\mu,\nu)^p
  =\inf_{\pi\in\Pi(\mu,\nu)}\int c_\lambda(x,y)\dd\pi(x,y).
\]
The minimal violation value is
\[
   V_{\preceq}(\mu,\nu)=\inf_{\pi\in\Pi(\mu,\nu)}\int \eta_{\preceq}(x,y)^p\dd\pi(x,y).
\]
\end{definition}

The hard value is a directed extended cost, not a metric.  The soft value is finite whenever the ordinary $p$-moment condition is finite, but it is still generally directed.  We use the complete Wasserstein metric for completeness and fixed points; the directed quantities measure admissibility.

\begin{proposition}[Non-scalarizability]
\label{prop:non-scalar}
If the order is nontrivial, then $\W^{\hard}_{p,\preceq}$ cannot be represented by any symmetric metric on $\calP_p(\calX)$.
\end{proposition}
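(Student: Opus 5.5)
The plan is to exhibit a pair of Dirac measures on which $\W^{\hard}_{p,\preceq}$ is finite in one direction and infinite in the other. Any symmetric metric, by contrast, is finite and takes equal values in both directions.

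We read ``nontrivial'' as: there exist $a\neq b$ in $\calX$ with $a\preceq b$. Antisymmetry of the partial order then gives $b\not\preceq a$.

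For Dirac measures, $\Pi(\delta_a,\delta_b)=\{\delta_{(a,b)}\}$ is a singleton. Hence:
\[
\W^{\hard}_{p,\preceq}(\delta_a,\delta_b)=d(a,b)<\infty,
\]
because $(a,b)\in G_{\preceq}$. On the other hand,
\[
\W^{\hard}_{p,\preceq}(\delta_b,\delta_a)=+\infty,
\]
because $\delta_{(b,a)}(G_{\preceq})=0$, so $\Pi_{\preceq}(\delta_b,\delta_a)=\varnothing$. Both Diracs lie in $\calP_p(\calX)$.

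Suppose $\W^{\hard}_{p,\preceq}=m$ for some symmetric metric $m$ on $\calP_p(\calX)$. Then $m(\delta_b,\delta_a)$ is both $+\infty$ and equal to $m(\delta_a,\delta_b)=d(a,b)<\infty$, a contradiction. Any stronger reading of ``represented'' is handled the same way. If we only ask for $\W^{\hard}_{p,\preceq}=\phi\circ m$ with $\phi$ a function, symmetry of $m$ still forces equal values at $(\mu,\nu)$ and $(\nu,\mu)$, and the Dirac pair violates this. If extended-valued metrics are allowed, symmetry alone still gives the contradiction.

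The main obstacle is only interpretive: fixing what ``nontrivial'' and ``represented'' mean. I would state both explicitly at the start of the proof. A second remark covers orders that are merely preorders. If antisymmetry fails, one instead takes any pair with $a\preceq b$ and $b\not\preceq a$; this is exactly what ``nontrivial'' should mean for a non-symmetric relation, and the argument above is unchanged.
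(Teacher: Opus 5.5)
Your proposal is correct and is essentially the paper's own proof: both take a Dirac pair $\delta_x,\delta_y$ with $x\preceq y$ but not $y\preceq x$, and use the fact that the unique coupling is ordered in one direction and not in the other. This gives $d(x,y)<\infty$ versus $+\infty$, which rules out any symmetric representation. Your extra remarks on the meaning of ``nontrivial'' and ``represented'' are sound refinements of the same argument.
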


\begin{proof}
Choose $x,y\in\calX$ with $x\preceq y$ but not $y\preceq x$.  The only coupling of $\delta_x$ and $\delta_y$ is ordered, whereas the only coupling of $\delta_y$ and $\delta_x$ is not.  Hence
\[
\W^{\hard}_{p,\preceq}(\delta_x,\delta_y)=d(x,y)<\infty,
\qquad
\W^{\hard}_{p,\preceq}(\delta_y,\delta_x)=+\infty.
\]
No symmetric finite metric can represent this directed reachability relation.
\end{proof}

\begin{proposition}[Wasserstein closeness does not certify admissibility]
\label{prop:close-not-admissible}
There exist probability measures whose ordinary Wasserstein distance is arbitrarily small while hard ordered transport in the intended direction is impossible.
\end{proposition}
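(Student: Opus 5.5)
The plan is to exhibit an explicit family of Dirac pairs, in the spirit of the proof of Proposition~\ref{prop:non-scalar}. Because the order is nontrivial, there is a pair with the order holding in one direction only. The task is to make the Wasserstein distance of such a pair small.

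\textbf{Base construction.} The cleanest instance is the guiding cone example. Take $\calX=\R^d$, a closed convex cone $K\neq\{0\}$ that is not a linear subspace, and a vector $v\in K$ with $-v\notin K$. For $\eps>0$ set
\[
\mu_\eps=\delta_{\eps v},\qquad \nu_\eps=\delta_0 .
\]
\begin{enumerate}[label=(\roman*)]
\item The only coupling of two Diracs is the product $\delta_{(\eps v,0)}$. This gives $W_p(\mu_\eps,\nu_\eps)=\eps|v|\to0$.
\item The same coupling is ordered precisely when $\eps v\preceq_K 0$, that is, when $-\eps v\in K$. Since $K$ is a cone, this is equivalent to $-v\in K$, which fails.
\item Therefore $\Pi_{\preceq}(\mu_\eps,\nu_\eps)=\varnothing$ and $\W^{\hard}_{p,\preceq}(\mu_\eps,\nu_\eps)=+\infty$ for every $\eps>0$.
\end{enumerate}
For contrast, the reverse direction satisfies $\W^{\hard}_{p,\preceq}(\nu_\eps,\mu_\eps)=\eps|v|$. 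This makes explicit that the obstruction is directional and not metric.

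\textbf{General closed ordered Polish spaces.} For a general space I would take $x_n\to x$ with $x_n\succeq x$ strictly, i.e.\ not $x_n\preceq x$, and consider $\delta_{x_n}$ to $\delta_x$. Here the main obstacle appears. In an arbitrary closed order such a sequence need not exist; the order could be trivial near every point, for example on a discrete space. So the statement is existential, and I would prove it by the concrete example above. I would also note that it extends to any space containing points $x_n\to x$ with $x\preceq x_n$ but not $x_n\preceq x$.

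\textbf{Absolutely continuous variant (optional).} If one wants non-Dirac examples, replace the Diracs by uniform measures on small balls translated by $\eps v$. Obstructing every ordered coupling then needs a support argument. For $K=[0,\infty)$ on $\R$, a translate to the left cannot dominate stochastically, since the quantile functions compare in the wrong way. I would present this only as a remark.
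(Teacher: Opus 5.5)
Your proof is correct and is essentially the paper's argument. The paper takes $\calX=\R$ with the usual order and the pair $\delta_0,\delta_{-1/n}$, which is your construction with $d=1$, $K=[0,\infty)$, $v=1$ up to translation; your cone version also matches the paper's appendix example with $K=\R_+^2$.

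One aside: the obstacle you flag for general spaces only affects Dirac sequences. Take $y\not\preceq x$, $\mu_\eps=(1-\eps)\delta_x+\eps\delta_y$ and $\nu_\eps=\delta_x$. Their only coupling is $\mu_\eps\otimes\delta_x$, which charges $(y,x)\notin G_{\preceq}$, while $W_p(\mu_\eps,\nu_\eps)=\eps^{1/p}d(x,y)\to0$. Such a pair exists on every closed ordered Polish space with at least two points, so the conclusion holds there too.
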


\begin{proof}
Take $\calX=\R$ with the usual order.  Let $\mu_n=\delta_0$ and $\nu_n=\delta_{-1/n}$.  Then $W_p(\mu_n,\nu_n)=1/n\to0$.  However, the only coupling is $\delta_{(0,-1/n)}$, whose support is not contained in $\{(x,y):x\le y\}$.  Thus $\Pi_{\preceq}(\mu_n,\nu_n)=\varnothing$ for every $n$.
\end{proof}

Proposition~\ref{prop:close-not-admissible} is the simplest reason the new structure is necessary.  Ordinary Wasserstein convergence controls magnitude of displacement, but not the admissible direction of displacement.  A reliable system may therefore be close in Wasserstein distance and still invalid from the viewpoint of evidence, risk, or monotonicity.

\begin{theorem}[Existence, closedness, and directed triangle]
\label{thm:deterministic-basic}
Under Assumption~\ref{ass:standing}, the following hold.
\begin{enumerate}[label=(\alph*),leftmargin=2em]
\item For each $\lambda\ge0$, the soft problem admits an optimal plan.
\item If $\Pi_{\preceq}(\mu,\nu)$ is nonempty, the hard problem admits an optimal ordered plan.
\item If $\mu_n\preceq_{\mathrm{st}}\nu_n$, $W_p(\mu_n,\mu)\to0$, and $W_p(\nu_n,\nu)\to0$, then $\mu\preceq_{\mathrm{st}}\nu$.
\item If $\mu\preceq_{\mathrm{st}}\nu\preceq_{\mathrm{st}}\xi$, then
\[
\W^{\hard}_{p,\preceq}(\mu,\xi)\le
\W^{\hard}_{p,\preceq}(\mu,\nu)+\W^{\hard}_{p,\preceq}(\nu,\xi).
\]
\end{enumerate}
\end{theorem}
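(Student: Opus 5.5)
The plan is to follow the classical direct method for Kantorovich problems, using tightness of coupling sets and lower semicontinuity of integrals of nonnegative lower semicontinuous costs, with closedness of $G_{\preceq}$ supplying the order-specific ingredient. The basic tool for (a)--(c) is that, for $\mu,\nu\in\calP(\calX)$ on a Polish space, $\Pi(\mu,\nu)$ is tight and weakly compact (Prokhorov, since each marginal is tight). More generally, if $\mu_n\to\mu$ and $\nu_n\to\nu$ weakly, then $\bigcup_n\Pi(\mu_n,\nu_n)$ is tight and every weak limit point of $\pi_n\in\Pi(\mu_n,\nu_n)$ lies in $\Pi(\mu,\nu)$.

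For (a), I first note that $\eta_{\preceq}$ is $1$-Lipschitz for $d_2$, since it is a distance function to a set, and hence continuous. The cost $c_\lambda=d^p+\lambda\eta_{\preceq}^p$ is therefore continuous and nonnegative. By the elementary bound $c_\lambda\le(1+\lambda)d^p$, the infimum is finite for $\mu,\nu\in\calP_p$. I take a minimizing sequence in $\Pi(\mu,\nu)$ and extract a weakly convergent subsequence by compactness. The portmanteau inequality $\int c\,d\pi\le\liminf\int c\,d\pi_n$ holds for nonnegative lower semicontinuous $c$, by approximating $c$ from below by bounded continuous functions. This gives an optimal plan.

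For (b), the additional point is that $\Pi_{\preceq}(\mu,\nu)$ is weakly closed. If $\pi_n(G_{\preceq})=1$ and $\pi_n\weakto\pi$, then the portmanteau theorem for the closed set $G_{\preceq}$ gives $\pi(G_{\preceq})\ge\limsup\pi_n(G_{\preceq})=1$. So $\Pi_{\preceq}(\mu,\nu)$ is a nonempty compact subset, and the same lower semicontinuity argument with cost $d^p$ produces an optimal ordered plan. Part (c) is the same closedness argument applied across varying marginals. Convergence in $W_p$ implies weak convergence, so choosing $\pi_n\in\Pi_{\preceq}(\mu_n,\nu_n)$ gives a tight family. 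A subsequential limit $\pi$ belongs to $\Pi(\mu,\nu)$ because marginal projections are continuous, and it satisfies $\pi(G_{\preceq})=1$ by the portmanteau inequality. Hence $\mu\preceq_{\mathrm{st}}\nu$.

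For (d), I will use the gluing lemma. If either term on the right is $+\infty$ there is nothing to prove; under the hypothesis both are finite, and by (b) optimal ordered plans $\pi_1\in\Pi_{\preceq}(\mu,\nu)$ and $\pi_2\in\Pi_{\preceq}(\nu,\xi)$ exist. I glue them along $\nu$ by disintegration, which is valid on Polish spaces, to obtain $\gamma$ on $\calX^3$ with $(1,2)$-marginal $\pi_1$ and $(2,3)$-marginal $\pi_2$. For $\gamma$-a.e.\ $(x,y,z)$ we have $x\preceq y$ and $y\preceq z$, so $x\preceq z$ by transitivity. Thus $\pi_{13}=(\mathrm{pr}_{1},\mathrm{pr}_3)_\#\gamma\in\Pi_{\preceq}(\mu,\xi)$. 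Minkowski's inequality in $L^p(\gamma)$ with $d(x,z)\le d(x,y)+d(y,z)$ then gives the triangle bound. This step also shows transitivity of $\preceq_{\mathrm{st}}$, which is needed for the left side to be finite.

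I expect the main subtlety to be the lower semicontinuity and closedness steps under only weak convergence when the costs are unbounded. Lower semicontinuity does not require uniform integrability, since we only need an inequality, so the argument is fine. The one point to check carefully is that in (c) $W_p$-convergence yields tightness of the coupling family, which follows from the tightness of the two convergent marginal sequences.
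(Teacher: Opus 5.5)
Your proposal is correct and follows essentially the same route as the paper: tightness of coupling sets plus lower semicontinuity for (a), Portmanteau closedness of $\Pi_{\preceq}$ for (b) and (c), and gluing with transitivity and Minkowski for (d). The only difference is cosmetic. In (d) you glue the exact optimal ordered plans supplied by (b), which is legitimate because nonemptiness of $\Pi_{\preceq}$ and finite $p$-th moments make both hard values finite, whereas the paper glues nearly optimal couplings.
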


The first two statements give well-posedness.  The third says that the induced stochastic order is topologically stable.  The fourth shows that hard ordered transport behaves like a directed path length on comparable triples.  These are the basic geometric properties needed before introducing randomness.

\begin{proof}
Part (a) follows from tightness of $\Pi(\mu,\nu)$ and lower semicontinuity of the nonnegative lower semicontinuous cost $c_\lambda$.  Part (b) uses the fact that $\Pi_{\preceq}(\mu,\nu)$ is weakly closed: if $\pi_n\weakto\pi$ and $\pi_n(G_{\preceq})=1$, then Portmanteau gives $\pi(G_{\preceq})=1$.  Part (c) follows by extracting weakly convergent ordered couplings from tightness and applying the same closedness argument.  For (d), glue nearly optimal ordered couplings of $(\mu,\nu)$ and $(\nu,\xi)$ to a measure on $\calX^3$.  Transitivity of $\preceq$ makes the $(x_1,x_3)$ marginal ordered, and Minkowski's inequality gives the triangle inequality.
\end{proof}

\begin{theorem}[Kantorovich duality]
\label{thm:duality}
For each finite $\lambda\ge0$,
\[
\W^{\soft}_{p,\lambda,\preceq}(\mu,\nu)^p
=
\sup_{\varphi,\psi}
\left\{
\int\varphi\dd\mu+\int\psi\dd\nu:
\varphi(x)+\psi(y)\le c_\lambda(x,y)
\right\}.
\]
Moreover the hard value admits the extended-cost dual
\[
\W^{\hard}_{p,\preceq}(\mu,\nu)^p
=
\sup_{\varphi,\psi}
\left\{
\int\varphi\dd\mu+\int\psi\dd\nu:
\varphi(x)+\psi(y)\le d(x,y)^p\;\text{for all }x\preceq y
\right\},
\]
with the usual extended-value convention.
\end{theorem}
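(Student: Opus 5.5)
Both identities will be obtained from the general Kantorovich duality theorem for nonnegative lower semicontinuous costs on Polish spaces, in the form given by \citet[Theorem~5.10]{villani2009optimal}. That theorem says that for any lower semicontinuous $c:\calX\times\calX\to[0,+\infty]$,
\[
\inf_{\pi\in\Pi(\mu,\nu)}\int c\dd\pi=\sup\Big\{\int\varphi\dd\mu+\int\psi\dd\nu:\ \varphi\in L^1(\mu),\ \psi\in L^1(\nu),\ \varphi(x)+\psi(y)\le c(x,y)\Big\}.
\]
This holds with no finiteness assumption on the primal value; the supremum may equivalently be taken over bounded continuous pairs. The whole proof therefore reduces to checking that each problem is an instance of this formula with an admissible cost.

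\emph{Soft identity.} Since $\eta_{\preceq}$ is a distance function to the closed set $G_{\preceq}$ under $d_2$, it is $1$-Lipschitz, hence continuous. So $c_\lambda=d^p+\lambda\eta_{\preceq}^p$ is continuous and nonnegative. By the bound $d^p\le c_\lambda\le(1+\lambda)d^p$ recorded after Assumption~\ref{ass:standing}, together with $\mu,\nu\in\calP_p(\calX)$, the primal value is finite. Moreover $c_\lambda(x,y)\le a(x)+b(y)$ with $a(x)=(1+\lambda)2^{p-1}d(x,x_0)^p\in L^1(\mu)$ and $b$ defined analogously, so all the integrability hypotheses of the theorem hold. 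Applying it directly gives the first displayed equality. Theorem~\ref{thm:deterministic-basic}(a) guarantees that the infimum is attained, though attainment is not needed for the duality itself.

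\emph{Hard identity.} Define the extended cost $c_\infty(x,y)=d(x,y)^p$ if $(x,y)\in G_{\preceq}$ and $c_\infty(x,y)=+\infty$ otherwise. The key check is lower semicontinuity. Take $(x_n,y_n)\to(x,y)$. If $(x,y)\in G_{\preceq}$, then $c_\infty(x_n,y_n)\ge d(x_n,y_n)^p\to d(x,y)^p$. If $(x,y)\notin G_{\preceq}$, then because $G_{\preceq}$ is closed, eventually $(x_n,y_n)\notin G_{\preceq}$, so $c_\infty(x_n,y_n)=+\infty$. Next, a coupling has finite $c_\infty$-cost only if $\pi(G_{\preceq})=1$. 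Hence
\[
\inf_{\pi\in\Pi(\mu,\nu)}\int c_\infty\dd\pi=\W^{\hard}_{p,\preceq}(\mu,\nu)^p,
\]
including the value $+\infty$ when $\Pi_{\preceq}(\mu,\nu)=\varnothing$. Finally, the constraint $\varphi(x)+\psi(y)\le c_\infty(x,y)$ is vacuous off $G_{\preceq}$, so it is exactly the constraint ``$\varphi(x)+\psi(y)\le d(x,y)^p$ for all $x\preceq y$''. The general theorem then yields the second identity, again with the extended-value convention.

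\emph{Main obstacle.} The delicate case is $\Pi_{\preceq}(\mu,\nu)=\varnothing$, where we must show the dual supremum really is $+\infty$. The general theorem covers this case, but if one wants a self-contained argument, I would proceed in two steps. First, for every $\lambda$, the soft dual potentials are feasible for the hard dual: on $G_{\preceq}$ we have $\eta_{\preceq}=0$, so $c_\lambda=d^p$ there. Hence
\[
\W^{\hard}_{p,\preceq}(\mu,\nu)^p\ \ge\ \text{hard dual}\ \ge\ \sup_\lambda\W^{\soft}_{p,\lambda,\preceq}(\mu,\nu)^p.
\]
Second, show by a tightness argument that $\W^{\soft}_{p,\lambda,\preceq}(\mu,\nu)^p\uparrow\W^{\hard}_{p,\preceq}(\mu,\nu)^p$ as $\lambda\to\infty$. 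Take optimal soft plans $\pi_\lambda$ (they exist by Theorem~\ref{thm:deterministic-basic}(a)). If the soft values stayed bounded, then $\int\eta_{\preceq}^p\dd\pi_\lambda\to0$. Any weak limit $\pi$ of the $\pi_\lambda$ satisfies $\int\eta_{\preceq}^p\dd\pi=0$ by lower semicontinuity, so $\pi$ is an ordered coupling. Lower semicontinuity of $\int d^p$ then gives $\int d^p\dd\pi\le\liminf_\lambda\W^{\soft}_{p,\lambda,\preceq}(\mu,\nu)^p$. Combined with the first step, this closes the duality gap in both the finite and the infinite case.
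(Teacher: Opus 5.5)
Your proposal is correct and follows the same route as the paper. You apply Kantorovich duality for lower semicontinuous costs on Polish spaces to $c_\lambda$ and to the extended cost $c_\infty=d^p+\iota_{G_{\preceq}}$, and note that the dual constraint is void off $G_{\preceq}$. Your explicit lower semicontinuity check for $c_\infty$ (using closedness of $G_{\preceq}$) and your optional self-contained treatment of the infeasible case, via soft duality plus the soft-to-hard limit (the content of Theorem~\ref{thm:soft-hard}), fill in details the paper leaves implicit.
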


The dual potentials are admissibility witnesses.  When $\lambda=0$ they reduce to ordinary Wasserstein critics.  When $\lambda>0$ the critic must respect the geometry of order violation.  This is the route by which future algorithms can turn CROTS theory into verifiers or order-aware discriminators.

\begin{proof}
For finite $\lambda$, the cost $c_\lambda$ is lower semicontinuous and has $p$-growth, so standard Kantorovich duality for lower semicontinuous costs on Polish spaces applies.  For the hard value, apply the same theorem to the extended cost $c_\infty(x,y)=d(x,y)^p+\iota_{G_{\preceq}}(x,y)$.  The dual constraint is void off $G_{\preceq}$ and becomes exactly the displayed inequality on ordered pairs.
\end{proof}

\paragraph{Discrete computation.}
For empirical measures $\mu=\sum_i a_i\delta_{x_i}$ and $\nu=\sum_j b_j\delta_{y_j}$, the soft ordered value is an ordinary discrete transport problem with the modified cost matrix
\[
      C_{ij}^{\lambda}=d(x_i,y_j)^p+\lambda\eta_{\preceq}(x_i,y_j)^p.
\]
Thus an entropy-regularized version minimizes
\[
      \langle C^\lambda,\Pi\rangle+\tau\sum_{ij}\Pi_{ij}(\log\Pi_{ij}-1)
\]
over couplings with marginals $a$ and $b$.  Standard Sinkhorn-type solvers therefore apply after replacing the ground cost by the order-penalized cost.  This paragraph is included only as a computational entry point; the paper remains focused on the space and stability theory.

\begin{theorem}[Soft-to-hard convergence]
\label{thm:soft-hard}
For fixed $\mu,\nu\in\calP_p(\calX)$, the map
\[
C_\lambda(\mu,\nu)=\W^{\soft}_{p,\lambda,\preceq}(\mu,\nu)^p
\]
is nondecreasing in $\lambda$ and
\[
\lim_{\lambda\to\infty} C_\lambda(\mu,\nu)=\W^{\hard}_{p,\preceq}(\mu,\nu)^p,
\]
where both sides may be $+\infty$.  If the hard value is finite and $\pi_\lambda$ is a family of soft minimizers, then
\[
      \int \eta_{\preceq}(x,y)^p\dd\pi_\lambda(x,y)
      \le \frac{\W^{\hard}_{p,\preceq}(\mu,\nu)^p}{\lambda},
\]
and every weak limit point of $\pi_\lambda$ as $\lambda\to\infty$ is hard ordered optimal.
\end{theorem}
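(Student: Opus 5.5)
The plan is to run a standard $\Gamma$-convergence/penalty argument on the coupling set $\Pi(\mu,\nu)$, using the existence results of Theorem~\ref{thm:deterministic-basic}.

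\textbf{Monotonicity and upper bound.} Since $\eta_{\preceq}\ge0$, the cost $c_\lambda$ is pointwise nondecreasing in $\lambda$, so $C_\lambda$ is nondecreasing as an infimum of nondecreasing functions. For any ordered plan $\pi\in\Pi_{\preceq}(\mu,\nu)$ we have $\eta_{\preceq}=0$ $\pi$-a.e., because $\pi(G_{\preceq})=1$. Hence $C_\lambda\le\int d^p\dd\pi$, and taking the infimum gives $C_\lambda\le \W^{\hard}_{p,\preceq}(\mu,\nu)^p$ for every $\lambda$. Thus $L:=\lim_\lambda C_\lambda$ exists in $[0,\infty]$ and $L\le \W^{\hard}$. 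If $L=\infty$ the limit identity is immediate, so I will assume $L<\infty$.

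\textbf{Violation bound.} When the hard value is finite, let $\pi_\lambda$ be a soft minimizer; it exists by Theorem~\ref{thm:deterministic-basic}(a). Then
\[
\lambda\int\eta_{\preceq}^p\dd\pi_\lambda\le\int c_\lambda\dd\pi_\lambda=C_\lambda\le \W^{\hard}_{p,\preceq}(\mu,\nu)^p,
\]
which gives the stated bound. The same computation with $L$ in place of the hard value gives $\int\eta^p\dd\pi_\lambda\le L/\lambda\to0$ whenever $L<\infty$.

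\textbf{Lower bound and limit points.} Since $\Pi(\mu,\nu)$ is tight and weakly closed, Prokhorov's theorem lets us extract $\pi_{\lambda_k}\weakto\pi\in\Pi(\mu,\nu)$ along any sequence $\lambda_k\to\infty$. The function $\eta_{\preceq}$ is Lipschitz for $d_2$, hence continuous and nonnegative, so $\eta^p$ is lower semicontinuous and bounded below. Portmanteau then gives $\int\eta^p\dd\pi\le\liminf_k\int\eta^p\dd\pi_{\lambda_k}=0$. Therefore $\eta_{\preceq}=0$ $\pi$-a.e. Because $G_{\preceq}$ is closed, $\{\eta_{\preceq}=0\}=G_{\preceq}$, and so $\pi\in\Pi_{\preceq}(\mu,\nu)$. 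Lower semicontinuity of $\int d^p$ under weak convergence, which holds since $d^p$ is continuous and nonnegative, then yields
\[
\W^{\hard}_{p,\preceq}(\mu,\nu)^p\le\int d^p\dd\pi\le\liminf_k\int d^p\dd\pi_{\lambda_k}\le\liminf_k C_{\lambda_k}=L.
\]
Combined with the upper bound, this gives equality $L=\W^{\hard}$. In particular, $L<\infty$ forces the hard value to be finite, which covers the case where both sides are $+\infty$. The same chain shows that $\int d^p\dd\pi=\W^{\hard}$, so every weak limit point is hard ordered optimal.

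\textbf{Main obstacle.} The delicate step is the identification $\{\eta_{\preceq}=0\}=G_{\preceq}$ together with passing the vanishing violation to the limit. Both depend on closedness of the order graph. A second point to check is that lower semicontinuity with unbounded costs is legitimate. This holds because the costs are nonnegative and continuous: approximate them by bounded truncations $\min(\cdot,M)$ and let $M\to\infty$. The tightness needed for extraction comes from fixed marginals, with no uniform moment control required.
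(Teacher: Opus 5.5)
Your proof is correct and follows the paper's argument essentially step for step. Monotonicity, the upper bound $C_\lambda\le\W^{\hard}_{p,\preceq}(\mu,\nu)^p$ from zero violation of ordered plans, the $O(1/\lambda)$ violation bound, extraction from the tight set $\Pi(\mu,\nu)$, lower semicontinuity giving a zero-violation limit plan supported on $G_{\preceq}$, and lower semicontinuity of $\int d^p$ for the reverse inequality all match the paper, and you are somewhat more explicit about $\{\eta_{\preceq}=0\}=G_{\preceq}$ and about running the chain along an arbitrary convergent subsequence, which is what the ``every weak limit point'' claim needs.
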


Theorem~\ref{thm:soft-hard} is the bridge from exact mathematical admissibility to differentiable learning objectives.  It also prevents a common misunderstanding: the penalty is not a heuristic surrogate with no limiting meaning; its infinite-penalty limit is exactly hard ordered transport.  The displayed violation bound is pointwise for the fixed pair $(\mu,\nu)$; uniform rates over a class of distributions require a uniform bound on the corresponding hard ordered costs.

\begin{proof}
Monotonicity is immediate.  Every ordered coupling has zero violation, hence $C_\lambda\le \W^{\hard}_{p,\preceq}(\mu,\nu)^p$ whenever the hard value is finite.  For a soft minimizer $\pi_\lambda$, this also gives
\[
\lambda\int \eta_{\preceq}(x,y)^p\dd\pi_\lambda(x,y)\le C_\lambda(\mu,\nu)\le \W^{\hard}_{p,\preceq}(\mu,\nu)^p,
\]
which is the stated $O(1/\lambda)$ violation control for this fixed pair $(\mu,\nu)$ in the feasible hard case.  No uniform rate over a distribution class is asserted without a corresponding uniform bound on the hard ordered cost.  Let $L=\lim_{\lambda\to\infty} C_\lambda$.  If $L<\infty$, choose $\lambda_n\to\infty$ and soft minimizers $\pi_n$.  Then
\[
\lambda_n\int \eta_{\preceq}(x,y)^p\dd\pi_n(x,y)\le C_{\lambda_n}(\mu,\nu)\le L+1
\]
for all large $n$, so the violation integrals tend to zero.  The couplings are tight because the marginals are fixed.  Along a weakly convergent subsequence, $\pi_n\weakto\pi\in\Pi(\mu,\nu)$.  Lower semicontinuity gives $\int\eta_{\preceq}^p\dd\pi=0$, so $\pi$ is supported on $G_{\preceq}$.  Lower semicontinuity of $d^p$ then gives the reverse inequality $\W^{\hard}_{p,\preceq}(\mu,\nu)^p\le L$.  If $L=\infty$, equality is automatic unless the hard value were finite, in which case the previous upper bound would force $L<\infty$.
\end{proof}

\section{Conditional Random Ordered Transport Spaces}
\label{sec:CROTS}
\label{sec:crots}

We now lift the deterministic construction to random probability measures.  Let $(\Omega,\calF,\Prob)$ be a probability space.  A random probability measure is a measurable map
\[
        \mu:\Omega\to\calP_p(\calX),
\]
where $\calP_p(\calX)$ is equipped with the Borel sigma-field induced by $W_p$.  For $q\in[1,\infty)$ define
\[
\calS_{p,q}=L^q(\Omega;\calP_p(\calX))
\]
as the collection of random probability measures satisfying $\E W_p(\mu_\omega,\delta_{x_0})^q<\infty$ for some, hence every, $x_0\in\calX$.  The complete ambient metric is
\[
\mathbf W_{p,q}(\mu,\nu)=\big(\E W_p(\mu_\omega,\nu_\omega)^q\big)^{1/q}.
\]

\begin{definition}[$L^0$-valued ordered transport discrepancies]
For $\mu,\nu\in\calS_{p,q}$ define the random variables
\[
\D^{0,\hard}_{p,\preceq}(\mu,\nu)(\omega)=\W^{\hard}_{p,\preceq}(\mu_\omega,\nu_\omega),
\qquad
\D^{0,\soft}_{p,\lambda,\preceq}(\mu,\nu)(\omega)=\W^{\soft}_{p,\lambda,\preceq}(\mu_\omega,\nu_\omega).
\]
We write $\mu\preceq_{\mathrm{a.s.}}\nu$ if $\mu_\omega\preceq_{\mathrm{st}}\nu_\omega$ for $\Prob$-a.e. $\omega$.
\end{definition}

The superscript $0$ emphasizes that the discrepancy is $L^0$-valued.  It is not immediately averaged into a scalar.  This is the random-metric aspect of the theory and is essential when admissibility should be evaluated conditionally on evidence.

\begin{theorem}[Measurability and completeness]
\label{thm:meas-complete}
Under Assumption~\ref{ass:standing}, the maps $(\alpha,\beta)\mapsto \W^{\soft}_{p,\lambda,\preceq}(\alpha,\beta)$ and $(\alpha,\beta)\mapsto\W^{\hard}_{p,\preceq}(\alpha,\beta)$ are Borel measurable as extended real-valued maps on $\calP_p(\calX)^2$.  Consequently $\D^{0,\soft}_{p,\lambda,\preceq}(\mu,\nu)$ and $\D^{0,\hard}_{p,\preceq}(\mu,\nu)$ are measurable extended random variables.  If $(\calX,d)$ is complete, then $(\calS_{p,q},\mathbf W_{p,q})$ is complete, and $\preceq_{\mathrm{a.s.}}$ is closed under $\mathbf W_{p,q}$ convergence.
\end{theorem}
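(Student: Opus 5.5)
The plan is to prove measurability through lower semicontinuity. I will show that both maps are lower semicontinuous on $(\calP_p(\calX),W_p)^2$, the hard map as an extended real-valued map. Lower semicontinuous maps are Borel, so this suffices.

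\emph{Soft case.} Fix $(\alpha_n,\beta_n)\to(\alpha,\beta)$ in $W_p$ and pass to a subsequence realizing $\liminf_n \W^{\soft}_{p,\lambda,\preceq}(\alpha_n,\beta_n)^p$. Take optimal soft plans $\pi_n$, which exist by Theorem~\ref{thm:deterministic-basic}(a). These plans are tight because their marginals converge weakly. Any weak limit $\pi$ lies in $\Pi(\alpha,\beta)$. The cost $c_\lambda\ge0$ is lower semicontinuous, since $\eta_{\preceq}$ is $1$-Lipschitz for $d_2$. Portmanteau therefore gives $\int c_\lambda\dd\pi\le\liminf_n\int c_\lambda\dd\pi_n$, which is the required lower semicontinuity. (In fact the soft map is continuous, using the bound $c_\lambda\le(1+\lambda)d^p$ and uniform integrability of $d^p$ under $W_p$ convergence. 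Only lower semicontinuity is needed, however.)

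\emph{Hard case.} The same argument goes through with $c_\infty=d^p+\iota_{G_\preceq}$. It remains lower semicontinuous because $G_\preceq$ is closed. If the $\liminf$ is finite, the plans $\pi_n$ are eventually ordered, and Theorem~\ref{thm:deterministic-basic}(b)--(c) supply the existence and closedness needed. Alternatively, write $\W^{\hard}=\sup_\lambda \W^{\soft}_{\lambda}$ by Theorem~\ref{thm:soft-hard}. This is a supremum of a countable monotone family, $\lambda\in\mathbb N$, of Borel maps, and is therefore Borel. I regard this second route as the cleanest. The measurability of $\D^{0,\cdot}$ then follows by composing with the measurable map $\omega\mapsto(\mu_\omega,\nu_\omega)$. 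That map is measurable into the product Borel $\sigma$-field, which coincides with the Borel $\sigma$-field of the product because $\calP_p(\calX)$ is Polish.

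\emph{Completeness.} When $\calX$ is complete (it is already Polish), $(\calP_p(\calX),W_p)$ is a complete separable metric space. Then $\calS_{p,q}$ is the Bochner-type space $L^q(\Omega;M)$ with values in a complete metric space $M$, and I will prove its completeness by the Riesz--Fischer argument. Given a Cauchy sequence, extract a subsequence with $\mathbf W_{p,q}(\mu_{k+1},\mu_k)<2^{-k}$. Minkowski and monotone convergence give $\sum_k W_p(\mu_{k+1,\omega},\mu_{k,\omega})<\infty$ almost surely. Hence $\mu_{k,\omega}$ is Cauchy in $M$ and converges to some $\mu_\omega$. The limit is measurable as a pointwise limit of measurable maps into a metric space. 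Fatou's lemma then gives $\mathbf W_{p,q}(\mu_k,\mu)\to0$ and $\mu\in\calS_{p,q}$, and convergence of the full sequence follows from the Cauchy property. Separability is used to ensure that $\omega\mapsto W_p(\mu_\omega,\nu_\omega)$ is measurable.

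\emph{Closedness of $\preceq_{\mathrm{a.s.}}$.} Suppose $\mu_n\preceq_{\mathrm{a.s.}}\nu_n$ with $\mathbf W_{p,q}$-limits $\mu,\nu$. Pass to a subsequence along which $W_p(\mu_{n,\omega},\mu_\omega)\to0$ and $W_p(\nu_{n,\omega},\nu_\omega)\to0$ almost surely; convergence in $L^q$ implies almost-sure convergence along a subsequence. Discard the countable union of null sets on which $\mu_{n,\omega}\preceq_{\mathrm{st}}\nu_{n,\omega}$ fails. On the remaining full-measure set, Theorem~\ref{thm:deterministic-basic}(c) gives $\mu_\omega\preceq_{\mathrm{st}}\nu_\omega$.

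The main obstacle is the measurability of the hard map, whose values may be infinite. Lower semicontinuity for extended-valued costs needs care with the empty-feasible-set convention. The monotone-supremum representation from Theorem~\ref{thm:soft-hard} avoids this issue entirely, so I will use it as the primary route.
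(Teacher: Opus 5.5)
Your proposal is correct and follows essentially the same route as the paper. Both arguments proceed in the same steps:
\begin{itemize}
\item lower semicontinuity of the soft value via tightness of couplings and Portmanteau;
\item the hard value as the monotone limit of $\W^{\soft}_{p,m,\preceq}$ as $m\to\infty$, from Theorem~\ref{thm:soft-hard};
\item composition with $\omega\mapsto(\mu_\omega,\nu_\omega)$;
\item the summable-subsequence completeness argument;
\item almost-sure subsequence convergence combined with Theorem~\ref{thm:deterministic-basic}(c) for closedness of $\preceq_{\mathrm{a.s.}}$.
\end{itemize}
Your extra remarks only make explicit details that the paper leaves implicit: the identification of the product Borel $\sigma$-field via separability, and the use of Fatou's lemma for the $L^q$ limit.
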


This theorem explains the separation of roles in CROTS.  The scalar metric $\mathbf W_{p,q}$ supplies a complete ambient space.  The ordered quantities are directed and $L^0$-valued; they supply admissibility information but are not forced to be metrics.  The order relation is fixed throughout this paper.  A model with random order graphs $G_{\preceq_\omega}$ would require additional joint measurability assumptions and is outside the scope of the present foundational treatment.

\begin{proof}
For finite $\lambda$, the cost $c_\lambda=d^p+\lambda\eta_{\preceq}^p$ is lower semicontinuous and has the same finite-moment growth as $d^p$.  The associated optimal transport value is therefore lower semicontinuous under $W_p$ convergence of the two marginals, and in particular Borel measurable on $\calP_p(\calX)^2$.  For the hard value, Theorem~\ref{thm:soft-hard} gives
\[
   \W^{\hard}_{p,\preceq}(\alpha,\beta)^p
   =\lim_{m\to\infty}\W^{\soft}_{p,m,\preceq}(\alpha,\beta)^p,
\]
with values in $[0,+\infty]$.  Hence the hard value is an extended Borel map as a monotone pointwise limit of Borel maps.  Since $\mu$ and $\nu$ are measurable maps into the Polish space $(\calP_p(\calX),W_p)$, the map $\omega\mapsto(\mu_\omega,\nu_\omega)$ is measurable, and composition gives the measurability of the two $L^0$-valued discrepancies.

Completeness is the standard $L^q$ argument for metric-valued random variables: from a $\mathbf W_{p,q}$-Cauchy sequence choose a subsequence with summable increments, obtain pointwise a.s. convergence in the complete Polish space $(\calP_p(\calX),W_p)$, and then use truncation and the Cauchy property to recover $L^q$ convergence of the whole sequence.

It remains to spell out the almost-sure order-closedness.  Suppose $\mu_n\preceq_{\mathrm{a.s.}}\nu_n$, $\mathbf W_{p,q}(\mu_n,\mu)\to0$, and $\mathbf W_{p,q}(\nu_n,\nu)\to0$.  Choose a subsequence $n_k$ such that
\[
   W_p(\mu_{n_k}(\omega),\mu(\omega))\to0,
   \qquad
   W_p(\nu_{n_k}(\omega),\nu(\omega))\to0
\]
for all $\omega$ outside a null set $N_0$.  For each $k$, let $N_k$ be a null set outside which $\mu_{n_k}(\omega)\preceq_{\mathrm{st}}\nu_{n_k}(\omega)$.  On the complement of the null set $N_0\cup\bigcup_k N_k$, the deterministic order-closedness in Theorem~\ref{thm:deterministic-basic}(c) gives $\mu(\omega)\preceq_{\mathrm{st}}\nu(\omega)$.  Thus $\mu\preceq_{\mathrm{a.s.}}\nu$.
\end{proof}

\begin{definition}[Conditional risk functional]
\label{def:conditional-risk}
Let $\calH\subseteq\calF$ be a sub-sigma-field representing evidence, environment, or task context.  A conditional risk functional is a map
\[
\rho_{\calH}:L^0_+(\calF)\to L^0_+(\calH)
\]
that is monotone, conditionally positive homogeneous, conditionally subadditive, and satisfies $\rho_{\calH}(0)=0$.
\end{definition}

We use ``conditional risk functional'' in this minimal nonnegative sense because the variables to be evaluated are order-discrepancy costs rather than signed financial positions.  Thus cash-invariance or translation equivariance is not needed for the basic CROTS construction.  When a standard conditional coherent risk measure or a dual representation is required, the corresponding additional assumptions are imposed explicitly, as in Proposition~\ref{prop:risk-dual}.  Examples include conditional expectation and conditional essential supremum on nonnegative random variables, and conditional average value-at-risk on its natural finite domain.

\begin{definition}[Conditional ordered transport risk]
\label{def:conditional-ordered-risk}
For $\mu,\nu\in\calS_{p,q}$, define
\[
\calR^{\preceq}_{\lambda,\rho}(\mu,\nu)
=\rho_{\calH}\Big(\D^{0,\soft}_{p,\lambda,\preceq}(\mu,\nu)^p\Big)^{1/p}.
\]
We call $\calR^{\preceq}_{\lambda,\rho}(\mu,\nu)$ the conditional ordered transport risk from $\mu$ to $\nu$.
\end{definition}

The functional $\calR^{\preceq}_{\lambda,\rho}$ is generally directed and is not used as a replacement for the complete metric $\mathbf W_{p,q}$.  It measures conditional admissibility risk: how costly it is, under the evidence encoded by $\calH$, to transport one random distribution into another while respecting the order.

\begin{definition}[Conditional Random Ordered Transport Space, CROTS]
\label{def:CROTS}
Fix $p,q\in[1,\infty)$, a closed ordered Polish space $(\calX,d,\preceq)$, a probability space $(\Omega,\calF,\Prob)$, an evidence sigma-field $\calH\subseteq\calF$, a finite penalty parameter $\lambda\in[0,\infty)$, and a conditional risk functional $\rho_{\calH}$.  The \emph{conditional random ordered transport space} is the structured object
\[
\boxed{
\CROTS_{p,q,\lambda,\rho}(\calX,\Omega,\calH)
=\big(\calS_{p,q},\mathbf W_{p,q},\preceq_{\mathrm{a.s.}},
\D^{0,\soft}_{p,\lambda,\preceq},\calR^{\preceq}_{\lambda,\rho}\big).
}
\]
\end{definition}

Definition~\ref{def:CROTS} is the new ambient space introduced in the paper.  The tuple deliberately lists both the complete metric and the directed risk.  The complete metric $\mathbf W_{p,q}$ supplies convergence and fixed point existence; the almost-sure order supplies admissibility; the $L^0$-valued discrepancy retains random ordered transport before averaging; and the conditional risk functional evaluates order violation relative to evidence.  The hard ordered case is recovered as the limiting case $\lambda=+\infty$ in Theorem~\ref{thm:soft-hard}, or equivalently by replacing $\D^{0,\soft}_{p,\lambda,\preceq}$ with $\D^{0,\hard}_{p,\preceq}$.  The point is not to turn a directed quantity into a metric, but to keep convergence, admissibility, randomness, and conditional evaluation simultaneously visible.

\begin{theorem}[Reduction theorem]
\label{thm:reduction}
CROTS reduces to classical structures in the following cases.
\begin{enumerate}[label=(\alph*),leftmargin=2em]
\item If $G_{\preceq}=\calX\times\calX$, then $\W^{\soft}_{p,\lambda,\preceq}=W_p$ and $\W^{\hard}_{p,\preceq}=W_p$.
\item If $\Omega$ is a singleton, the space reduces to deterministic ordered transport.
\item If only Dirac masses are considered, then
\[
\W^{\hard}_{p,\preceq}(\delta_x,\delta_y)=
\begin{cases}
 d(x,y), & x\preceq y,\\
 +\infty, & \text{otherwise,}
\end{cases}
\]
and $\W^{\soft}_{p,\lambda,\preceq}(\delta_x,\delta_y)^p=d(x,y)^p+\lambda\eta_{\preceq}(x,y)^p$.
\item If $\rho_{\calH}(Z)=\E Z$, $\lambda=0$, and $q=p$, then $\calR^{\preceq}_{0,\rho}$ is the usual $L^p$ random Wasserstein distance.
\end{enumerate}
\end{theorem}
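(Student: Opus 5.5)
The plan is to verify the four reductions directly from the definitions, one part at a time, using only Assumption~\ref{ass:standing} and Theorem~\ref{thm:soft-hard}.

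\textbf{Part (a).} If $G_{\preceq}=\calX\times\calX$, then $\eta_{\preceq}\equiv0$, because every pair lies in the graph. Hence $c_\lambda=d^p$ for every $\lambda$, and the soft problem is literally the Kantorovich problem, so $\W^{\soft}_{p,\lambda,\preceq}=W_p$. Likewise $\pi(G_{\preceq})=1$ holds for every coupling, so $\Pi_{\preceq}(\mu,\nu)=\Pi(\mu,\nu)$ and the hard value equals $W_p$. One could also read the hard identity off from Theorem~\ref{thm:soft-hard}, since the soft values are constant in $\lambda$.

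\textbf{Part (b).} If $\Omega=\{\omega_0\}$, every random probability measure is a single element of $\calP_p(\calX)$, and $\calS_{p,q}\cong\calP_p(\calX)$. Under this identification $\mathbf W_{p,q}=W_p$, the relation $\preceq_{\mathrm{a.s.}}$ becomes $\preceq_{\mathrm{st}}$, and the $L^0$ discrepancies become the deterministic values. The only sub-sigma-field is $\calH=\{\varnothing,\Omega\}=\calF$, so $L^0_+(\calH)\cong[0,\infty)$. Monotonicity, positive homogeneity and $\rho_{\calH}(0)=0$ then force $\rho_{\calH}(z)=z\,\rho_{\calH}(1)$. If $\rho_{\calH}$ is normalized ($\rho_{\calH}(1)=1$, as for expectation or essential supremum), then $\calR^{\preceq}_{\lambda,\rho}$ is the deterministic soft ordered cost. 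Otherwise it is that cost up to a constant factor. I would state this normalization explicitly. This is the only step where the statement is slightly informal, so I expect it to be the main point needing care.

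\textbf{Part (c).} Here $\Pi(\delta_x,\delta_y)=\{\delta_{(x,y)}\}$. This coupling is ordered iff $(x,y)\in G_{\preceq}$, that is, iff $x\preceq y$, and its cost is $d(x,y)^p$. Otherwise the feasible set is empty and the value is $+\infty$ by convention. The soft value is $c_\lambda(x,y)=d(x,y)^p+\lambda\eta_{\preceq}(x,y)^p$, obtained by integrating against the single coupling.

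\textbf{Part (d).} Take $\lambda=0$, so $c_0=d^p$ and $\D^{0,\soft}_{p,0,\preceq}(\mu,\nu)(\omega)=W_p(\mu_\omega,\nu_\omega)$. With $\rho_{\calH}=\E$ (read as conditional expectation with $\calH$ trivial, or as plain expectation), we get $\calR=(\E W_p(\mu_\omega,\nu_\omega)^p)^{1/p}=\mathbf W_{p,p}(\mu,\nu)$ when $q=p$. This quantity is finite by Minkowski, using the triangle inequality through $\delta_{x_0}$ and membership in $\calS_{p,p}$. Measurability of the integrand comes from Theorem~\ref{thm:meas-complete}. For a nontrivial $\calH$ the result is the conditional $L^p$ random Wasserstein distance $(\E[W_p^p\mid\calH])^{1/p}$, whose expectation-level version is the usual one; I would note this remark.
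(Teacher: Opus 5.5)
Your proposal is correct and follows the paper's own argument, which reads each case directly off the definitions: $\eta_{\preceq}\equiv0$ for (a), the unique coupling $\delta_{(x,y)}$ for (c), and the pointwise identity $\D^{0,\soft}_{p,0,\preceq}(\mu,\nu)(\omega)=W_p(\mu_\omega,\nu_\omega)$ for (d). Your extra care in (b) is a legitimate refinement that the paper leaves implicit: positive homogeneity on the trivial $\calH$ forces $\rho_{\calH}(z)=z\,\rho_{\calH}(1)$, so the reduction to deterministic ordered transport is exact only when $\rho_{\calH}$ is normalized.
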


The reduction theorem is a positioning result.  It shows that CROTS is not an isolated definition.  Wasserstein space, deterministic ordered transport, random Wasserstein geometry, and the underlying ordered metric geometry all appear as special or degenerate layers.

\begin{proof}
All statements follow from the definitions.  The first uses $\eta_{\preceq}=0$ when the order graph is the whole product.  The Dirac formula follows because the only coupling of $\delta_x$ and $\delta_y$ is $\delta_{(x,y)}$.  The final statement follows from $\D^{0,\soft}_{p,0,\preceq}(\mu,\nu)=W_p(\mu_\omega,\nu_\omega)$ pointwise.
\end{proof}

\begin{proposition}[Risk-transport duality under strong interchange assumptions]
\label{prop:risk-dual}
Assume $\rho_{\calH}$ has a conditional coherent dual representation
\[
\rho_{\calH}(Z)=\esssup_{Q\in\calQ_{\calH}} \E_Q[Z\mid\calH]
\]
for nonnegative $Z$.  Then, for compact $\calX$ and finite $\lambda$,
\[
\calR^{\preceq}_{\lambda,\rho}(\mu,\nu)^p
=
\esssup_{Q\in\calQ_{\calH}}
\E_Q\left[
\sup_{\varphi+\psi\le c_\lambda}
\left(\int\varphi\dd\mu_\omega+\int\psi\dd\nu_\omega\right)
\Bigm|\calH
\right]
\]
whenever the standard measurable interchange conditions hold.
\end{proposition}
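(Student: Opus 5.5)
The plan is to establish the identity fiberwise in $\omega$ first and then push it through the risk functional. By Definition~\ref{def:conditional-ordered-risk},
\[
\calR^{\preceq}_{\lambda,\rho}(\mu,\nu)^p=\rho_{\calH}\big(Z\big),
\qquad
Z(\omega)=\W^{\soft}_{p,\lambda,\preceq}(\mu_\omega,\nu_\omega)^p .
\]
Since $Z\ge0$, the assumed dual representation applies. It gives $\rho_{\calH}(Z)=\esssup_{Q\in\calQ_{\calH}}\E_Q[Z\mid\calH]$. It therefore suffices to identify $Z(\omega)$, for each fixed $\omega$, with the inner supremum
\[
S(\omega)=\sup_{\varphi+\psi\le c_\lambda}\Big(\int\varphi\dd\mu_\omega+\int\psi\dd\nu_\omega\Big).
\]
After that, substitute $S$ for $Z$ inside the conditional expectations.

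For the fiberwise identity $Z(\omega)=S(\omega)$, I apply Theorem~\ref{thm:duality} with $(\mu_\omega,\nu_\omega)$ in place of $(\mu,\nu)$. Compactness of $\calX$ makes $d$ bounded, so every Borel probability measure lies in $\calP_p(\calX)$. It also makes $c_\lambda$ bounded and lower semicontinuous; in fact it is continuous, since $\eta_{\preceq}$ is $1$-Lipschitz for $d_2$. Hence the duality holds with no moment caveats. Moreover, the supremum may be restricted to pairs $(\varphi,\psi)$ of $c_\lambda$-concave continuous functions, which is useful below. This gives $Z(\omega)=S(\omega)$ for every $\omega$, not merely almost surely.

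The remaining issue, which I expect to be the main obstacle, is that $S$ is defined as a pointwise supremum over an uncountable family. One must check that it is an $\calF$-measurable $[0,\infty]$-valued random variable, so that $\E_Q[S\mid\calH]$ makes sense. This is exactly where the \emph{standard measurable interchange conditions} are invoked. I would discharge it concretely in the compact case. The space $C(\calX)^2$ is separable, so I take a countable dense set of admissible pairs. For these I use the $c_\lambda$-transform: $\varphi^{c}(y)=\min_x\big(c_\lambda(x,y)-\varphi(x)\big)$ is continuous and keeps the pair admissible. The value of an admissible pair is continuous in the pair under uniform convergence. Together these show that $S(\omega)$ equals a supremum over a countable family $\{(\varphi_k,\psi_k)\}$. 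Each map $\omega\mapsto\int\varphi_k\dd\mu_\omega$ is measurable, because $\mu$ is measurable into $(\calP(\calX),W_p)$ and integration of a continuous function is weakly continuous. Hence $S$ is measurable.

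Measurability of $Z=S$ also follows independently from Theorem~\ref{thm:meas-complete}, which gives a consistency check. With $Z=S$ as random variables, the displayed formula follows by substitution into the dual representation of $\rho_{\calH}$. Both sides are nonnegative and bounded by $(1+\lambda)\diam(\calX)^p$, so all conditional expectations are finite and no extended-value issues arise.
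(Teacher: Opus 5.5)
Your proposal is correct and follows the paper's route: apply the assumed dual representation of $\rho_{\calH}$ to $Z=\D^{0,\soft}_{p,\lambda,\preceq}(\mu,\nu)^p$, then identify $Z(\omega)$ with the inner Kantorovich supremum fiberwise via Theorem~\ref{thm:duality}. The only difference is the measurability step: the paper appeals to measurable near-optimal potentials under the interchange conditions, whereas you note that only measurability of the value $S$ is needed, and this follows from $S=Z$ pointwise together with Theorem~\ref{thm:meas-complete} (or from your countable dense family of continuous potentials), so in the compact case no selection argument is required.
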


This result is not used in the fixed point theory below; it is included to reveal the dual structure of CROTS under compactness and measurable-selection assumptions.  Conditional risk chooses a worst-case or stress-test model, while Kantorovich duality chooses order-aware transport witnesses inside that model.  In applications, the two levels correspond to stress scenarios and admissibility critics.

\begin{proof}
Apply the dual representation of $\rho_{\calH}$ to $Z=\D^{0,\soft}_{p,\lambda,\preceq}(\mu,\nu)^p$.  For each $\omega$, Theorem~\ref{thm:duality} represents the soft ordered transport value by the Kantorovich dual.  Compactness and lower semicontinuity give existence of measurable near-optimal potentials under the stated interchange conditions, yielding the displayed formula.
\end{proof}

\section{Geometry and Operators}
\label{sec:operators}

The previous section defines the space.  This section shows that it has geometric and dynamical content.  We first introduce ordered geodesics, barycenters, and projections.  We then give fixed point and order-risk stability principles.  The central message is that metric convergence and order admissibility are related but distinct phenomena.

\begin{definition}[Order-geodesic compatibility]
The ordered metric space $(\calX,d,\preceq)$ is order-geodesically compatible if for every $x\preceq y$ there exists a constant-speed geodesic $\gamma_{xy}:[0,1]\to\calX$ from $x$ to $y$ such that
\[
        \gamma_{xy}(s)\preceq \gamma_{xy}(t),\qquad 0\le s\le t\le1.
\]
We assume the selection $(x,y)\mapsto\gamma_{xy}$ can be chosen Borel on $G_{\preceq}$.
\end{definition}

\begin{theorem}[Hard ordered geodesics]
\label{thm:geodesics}
Assume $(\calX,d,\preceq)$ is order-geodesically compatible.  Let $\mu_0\preceq_{\mathrm{st}}\mu_1$, and let $\pi^\star\in\Pi_{\preceq}(\mu_0,\mu_1)$ be a hard ordered optimal plan.  Push $\pi^\star$ through $(x,y)\mapsto\gamma_{xy}$ and set $\mu_t=(e_t)_\#\Gamma$.  Then
\[
       \mu_s\preceq_{\mathrm{st}}\mu_t,
       \qquad 0\le s\le t\le 1,
\]
and
\[
       \W^{\hard}_{p,\preceq}(\mu_s,\mu_t)=(t-s)\W^{\hard}_{p,\preceq}(\mu_0,\mu_1).
\]
If $\pi^\star$ is also ordinary $W_p$-optimal, then $(\mu_t)$ is an ordinary Wasserstein geodesic.
\end{theorem}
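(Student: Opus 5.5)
The plan is to build everything from the dynamic plan $\Gamma$, prove an upper bound by using explicit ordered couplings, and then obtain the matching lower bound from the directed triangle inequality in Theorem~\ref{thm:deterministic-basic}(d).

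\textbf{Step 1: the dynamic plan.} Let $G:G_{\preceq}\to C([0,1];\calX)$, $G(x,y)=\gamma_{xy}$, be the Borel selection assumed in the definition of order-geodesic compatibility. Since $\pi^\star(G_{\preceq})=1$, the measure $\Gamma=G_\#\pi^\star$ on path space is well defined. The evaluation maps $e_t(\gamma)=\gamma(t)$ are continuous, so $\mu_t=(e_t)_\#\Gamma$ is a Borel probability measure. Because $\gamma_{xy}$ runs from $x$ to $y$, we get $\mu_0$ and $\mu_1$ back at the endpoints.

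To check that $\mu_t\in\calP_p(\calX)$, note $d(\gamma_{xy}(t),x_0)\le d(x,x_0)+t\,d(x,y)$. Hence the $p$th moment of $\mu_t$ is bounded by a constant times the moments of $\mu_0$ and $\mu_1$.

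The hard value $\W^{\hard}_{p,\preceq}(\mu_0,\mu_1)$ is finite. Indeed, $\pi^\star$ is an ordered coupling and $\int d^p\dd\pi^\star<\infty$ because both marginals lie in $\calP_p(\calX)$.

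\textbf{Step 2: ordering and the upper bound.} For $s\le t$, set $\pi_{st}=(e_s,e_t)_\#\Gamma$, which lies in $\Pi(\mu_s,\mu_t)$. Each path $\gamma_{xy}$ satisfies $\gamma_{xy}(s)\preceq\gamma_{xy}(t)$, so $\pi_{st}(G_{\preceq})=1$. This gives $\mu_s\preceq_{\mathrm{st}}\mu_t$.

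The constant-speed property gives $d(\gamma_{xy}(s),\gamma_{xy}(t))=(t-s)d(x,y)$. Therefore
\[
\W^{\hard}_{p,\preceq}(\mu_s,\mu_t)^p\le\int d(\gamma(s),\gamma(t))^p\dd\Gamma=(t-s)^p\int d^p\dd\pi^\star=(t-s)^p\,\W^{\hard}_{p,\preceq}(\mu_0,\mu_1)^p,
\]
where the last equality uses optimality of $\pi^\star$.

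\textbf{Step 3: the lower bound.} By Step 2 we have the chain $\mu_0\preceq_{\mathrm{st}}\mu_s\preceq_{\mathrm{st}}\mu_t\preceq_{\mathrm{st}}\mu_1$. Applying Theorem~\ref{thm:deterministic-basic}(d) twice and then the Step 2 bound to the two outer pieces gives
\[
\W^{\hard}_{p,\preceq}(\mu_0,\mu_1)\le s\,\W^{\hard}_{p,\preceq}(\mu_0,\mu_1)+\W^{\hard}_{p,\preceq}(\mu_s,\mu_t)+(1-t)\,\W^{\hard}_{p,\preceq}(\mu_0,\mu_1).
\]
All terms are finite, so rearranging yields $\W^{\hard}_{p,\preceq}(\mu_s,\mu_t)\ge(t-s)\W^{\hard}_{p,\preceq}(\mu_0,\mu_1)$. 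Combined with Step 2, this gives the claimed equality.

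\textbf{Step 4: the Wasserstein statement.} Now suppose $\pi^\star$ is also $W_p$-optimal. The same coupling $\pi_{st}$ shows $W_p(\mu_s,\mu_t)\le(t-s)W_p(\mu_0,\mu_1)$. The ordinary triangle inequality then gives the reverse inequality exactly as in Step 3, so $(\mu_t)$ is a constant-speed $W_p$-geodesic.

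\textbf{Main obstacle.} The metric content is routine once the setup is in place. The delicate points are the measurability of $\Gamma$, which rests entirely on the assumed Borel selection on $G_{\preceq}$, and the finiteness needed to cancel terms in Step 3. Since the directed triangle inequality holds only on comparable triples, I would first make sure the ordering chain from Step 2 is fully established before invoking Theorem~\ref{thm:deterministic-basic}(d).
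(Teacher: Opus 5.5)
Your proposal is correct and follows the paper's own route: the upper bound comes from the induced coupling $(e_s,e_t)_\#\Gamma$, which is ordered because the selected geodesics preserve order and whose cost is controlled by their constant speed; equality then follows from the directed triangle inequality applied over $[0,s]$, $[s,t]$, $[t,1]$. You also check several points the paper leaves implicit: that $\mu_t\in\calP_p(\calX)$, that the hard value is finite so the cancellation in Step 3 is legitimate, and that the final Wasserstein-geodesic clause follows by repeating the argument with the ordinary triangle inequality.
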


The final sentence is important.  Ordered geodesics are geodesics for the directed hard ordered cost.  They are ordinary Wasserstein geodesics only when the ordered plan is also unrestricted optimal.  This avoids conflating directed admissible geometry with symmetric Wasserstein geometry.  Order-geodesic compatibility is an additional geometric hypothesis, not part of the minimal CROTS definition.  In the Euclidean cone case used in the introduction, the choice $\gamma_{xy}(t)=(1-t)x+ty$ is continuous and order-preserving whenever $y-x\in K$ and $K$ is a closed convex cone.  More generally, a Borel geodesic selection can be justified in settings with a measurable geodesic interpolation map or by standard measurable-selection arguments under closed-valued assumptions.  In curved, causal, or semantic spaces, however, this condition is a genuine geometric requirement rather than an automatic property.

\begin{proof}
For $s\le t$, the coupling induced by $\gamma\mapsto(\gamma(s),\gamma(t))$ is supported on $G_{\preceq}$ by order-geodesic compatibility.  Since each selected curve is constant speed, the induced ordered coupling gives the upper bound
\[
\W^{\hard}_{p,\preceq}(\mu_s,\mu_t)
\le (t-s)\W^{\hard}_{p,\preceq}(\mu_0,\mu_1).
\]
Applying this bound to the intervals $[0,s]$, $[s,t]$, and $[t,1]$ and using the directed triangle inequality forces equality.
\end{proof}

\begin{theorem}[Barycenters and projections]
\label{thm:bary-proj}
Assume $\calX$ is compact.  Let $\mu_1,\ldots,\mu_m\in\calP_p(\calX)$, $a_i>0$, $\sum_i a_i=1$, and let $\calC\subseteq\calP_p(\calX)$ be nonempty and $W_p$-closed.  Then the constrained barycenter problem
\[
\inf_{\nu\in\calC}\sum_{i=1}^m a_i\W^{\soft}_{p,\lambda,\preceq}(\nu,\mu_i)^p
\]
admits a minimizer.  For every $\mu\in\calP_p(\calX)$, the projection problem
\[
\inf_{\nu\in\calC}\W^{\soft}_{p,\lambda,\preceq}(\mu,\nu)^p
\]
admits a minimizer.
\end{theorem}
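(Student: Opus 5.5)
The plan is the direct method of the calculus of variations on the compact space $(\calP_p(\calX),W_p)$. Since $\calX$ is compact, $\calP_p(\calX)=\calP(\calX)$ and, by Prokhorov's theorem, it is weakly compact. Because $d$ is bounded on a compact space, $W_p$ metrizes weak convergence there. Hence $(\calP_p(\calX),W_p)$ is a compact metric space. The set $\calC$ is $W_p$-closed, so it is compact. The two objective functionals are nonnegative, and by the elementary bound $c_\lambda\le(1+\lambda)d^p\le(1+\lambda)\diam(\calX)^p$ they are finite on $\calC$. Each infimum is therefore a finite real number. It then suffices to show that each objective is lower semicontinuous in $\nu$ with respect to $W_p$. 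A lower semicontinuous function on a nonempty compact set attains its infimum.

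The key step is the lower semicontinuity of $(\alpha,\beta)\mapsto \W^{\soft}_{p,\lambda,\preceq}(\alpha,\beta)^p$ under joint weak convergence. This is exactly what the proof of Theorem~\ref{thm:meas-complete} asserts. I would nevertheless give a self-contained argument. Take $\alpha_n\weakto\alpha$ and $\beta_n\weakto\beta$, and pass to a subsequence realizing the $\liminf$ of the values. By Theorem~\ref{thm:deterministic-basic}(a), choose optimal plans $\pi_n\in\Pi(\alpha_n,\beta_n)$. These plans live on the compact space $\calX\times\calX$, so they are tight, and a further subsequence converges weakly to some $\pi$. Marginals pass to the limit, so $\pi\in\Pi(\alpha,\beta)$.

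The cost $c_\lambda=d^p+\lambda\eta_{\preceq}^p$ is continuous, because $\eta_{\preceq}$ is a distance function to the closed set $G_{\preceq}$ and is therefore $1$-Lipschitz for $d_2$. It is also bounded on the compact product. Hence $\int c_\lambda\dd\pi_n\to\int c_\lambda\dd\pi$. This gives
\[
\W^{\soft}_{p,\lambda,\preceq}(\alpha,\beta)^p\le\int c_\lambda\dd\pi=\lim_n\W^{\soft}_{p,\lambda,\preceq}(\alpha_n,\beta_n)^p .
\]
In fact the argument shows the value is continuous, but lower semicontinuity is all that is needed.

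For the barycenter problem, fix each $\mu_i$ and apply the previous step with $\alpha_n=\nu_n$ and $\beta_n=\mu_i$ constant. Each map $\nu\mapsto\W^{\soft}_{p,\lambda,\preceq}(\nu,\mu_i)^p$ is then lower semicontinuous. A finite positive combination $\sum_i a_i(\cdot)$ of lower semicontinuous functions is lower semicontinuous. Take a minimizing sequence in $\calC$, extract a $W_p$-convergent subsequence with limit in $\calC$ by compactness and closedness, and conclude that the limit is a minimizer.

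The projection problem is identical, with $\mu$ fixed in the first slot and $\nu$ varying in the second. Directedness of the soft cost causes no difficulty, since lower semicontinuity holds jointly in both arguments.

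The only point requiring care is the identification of the $W_p$ topology with weak convergence, so that compactness of $\calC$ is available in the metric in which it is assumed closed. On a compact $\calX$ this is standard because $d^p$ is bounded and continuous. I expect no further obstacle. If $\calX$ were merely Polish, compactness of $\calC$ would fail and one would need coercivity, which is why the compactness hypothesis is imposed.
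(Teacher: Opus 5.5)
Your proposal is correct and follows the same route as the paper's proof: $(\calP_p(\calX),W_p)$ is compact for compact $\calX$, so the closed set $\calC$ is compact, and the soft ordered value is lower semicontinuous; you simply make that last step explicit by extracting weakly convergent optimal plans and using that $c_\lambda$ is bounded and continuous on $\calX\times\calX$. One minor point: your remark that this argument \emph{in fact} shows continuity overstates it, because the extraction only yields the $\liminf$ inequality. Continuity does hold here, but it needs an additional upper bound, e.g.\ by gluing; since only lower semicontinuity is used, this does not affect your proof.
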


Barycenters and projections are the first variational objects in CROTS.  In applications, they correspond to evidence-admissible prototypes and order-preserving corrections.  The compact theorem stated here is intentionally foundational.  It is not meant to cover the full noncompact Euclidean cone setting from the guiding example, and it is the only point in the main geometric development where compactness is used in an essential way.  In Polish spaces, the same direct-method argument extends whenever the admissible set is sequentially tight and the objective is coercive in the $p$-moment sense.  Equivalently, one may work with compact evidence-admissible subsets, uniform $p$-moment bounds together with Prokhorov tightness, or moment-coercive constraints that prevent minimizing sequences from escaping to infinity.  Thus the ordered transport, soft-to-hard limit, random lift, and operator-stability parts of CROTS do not require compactness; noncompact barycenter and projection theory requires additional coercivity assumptions and is a natural continuation.

\begin{proof}
Compactness of $\calX$ implies compactness of $\calP_p(\calX)$ in $W_p$.  A closed nonempty subset $\calC$ is compact.  Lower semicontinuity of the soft ordered cost gives existence of minimizers for both problems.
\end{proof}

\begin{definition}[Local order leakage]
Let $T:\calS_{p,q}\to\calS_{p,q}$ be a random measure-valued operator.  Its local soft order leakage at $\mu$ is
\[
      L_T(\mu)=\calR^{\preceq}_{\lambda,\rho}(\mu,T\mu).
\]
It measures whether the one-step transition $\mu\mapsto T\mu$ is conditionally admissible.
\end{definition}

This definition is the correct location of the error-floor phenomenon.  Since $\mathbf W_{p,q}$ convergence to a fixed point can force soft transport to that fixed point to vanish, the persistent risk of a learning system is better understood as leakage of its transitions, or as leakage relative to an ideal admissible operator, not as ordinary distance to the fixed point.

\begin{theorem}[Random ordered fixed points]
\label{thm:random-fp}
If $(\calX,d)$ is complete and $T:\calS_{p,q}\to\calS_{p,q}$ satisfies
\[
      \mathbf W_{p,q}(T\mu,T\nu)\le \alpha \mathbf W_{p,q}(\mu,\nu),
      \qquad 0\le\alpha<1,
\]
then $T$ has a unique fixed point $\mu^\star\in\calS_{p,q}$ and
\[
      \mathbf W_{p,q}(T^n\mu_0,\mu^\star)
      \le \alpha^n\mathbf W_{p,q}(\mu_0,\mu^\star).
\]
If $T$ is a.s. order-preserving and $\mu_0\preceq_{\mathrm{a.s.}}T\mu_0$, then
\[
      \mu_0\preceq_{\mathrm{a.s.}}T\mu_0\preceq_{\mathrm{a.s.}}\cdots\preceq_{\mathrm{a.s.}}T^n\mu_0\preceq_{\mathrm{a.s.}}\mu^\star.
\]
\end{theorem}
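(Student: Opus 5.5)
The plan has two parts: Banach's contraction principle for the fixed point and the rate, then an induction combined with the almost-sure order-closedness of Theorem~\ref{thm:meas-complete} for the chain. To use Banach we need $(\calS_{p,q},\mathbf W_{p,q})$ to be a complete metric space, understood modulo a.s.\ equality of random measures. Theorem~\ref{thm:meas-complete} gives completeness because $(\calX,d)$ is complete. Since $T$ is an $\alpha$-contraction with $\alpha<1$, it has a unique fixed point $\mu^\star$ there. The rate follows by iterating the contraction at the fixed point:
\[
\mathbf W_{p,q}(T^n\mu_0,\mu^\star)=\mathbf W_{p,q}(T^n\mu_0,T^n\mu^\star)\le\alpha^n\mathbf W_{p,q}(\mu_0,\mu^\star).
\]
Consequently $T^n\mu_0\to\mu^\star$ in $\mathbf W_{p,q}$.

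For the order chain we read ``a.s.\ order-preserving'' as: $\mu\preceq_{\mathrm{a.s.}}\nu$ implies $T\mu\preceq_{\mathrm{a.s.}}T\nu$. Starting from $\mu_0\preceq_{\mathrm{a.s.}}T\mu_0$, induction gives $T^n\mu_0\preceq_{\mathrm{a.s.}}T^{n+1}\mu_0$ for every $n$. Next we need that $\preceq_{\mathrm{st}}$ is transitive on $\calP_p(\calX)$. Given ordered couplings of $(\alpha,\beta)$ and of $(\beta,\gamma)$, glue them along $\beta$ as in the proof of Theorem~\ref{thm:deterministic-basic}(d). Transitivity of $\preceq$ then makes the $(1,3)$ marginal an ordered coupling of $(\alpha,\gamma)$. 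Working on the intersection of countably many full-measure sets, this yields $T^n\mu_0\preceq_{\mathrm{a.s.}}T^m\mu_0$ for all $m\ge n$.

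To get the final link $T^n\mu_0\preceq_{\mathrm{a.s.}}\mu^\star$, fix $n$ and take $m\to\infty$. The constant sequence $T^n\mu_0$ converges trivially, and $T^m\mu_0\to\mu^\star$ in $\mathbf W_{p,q}$. The closedness of $\preceq_{\mathrm{a.s.}}$ under $\mathbf W_{p,q}$ convergence in Theorem~\ref{thm:meas-complete} then gives $T^n\mu_0\preceq_{\mathrm{a.s.}}\mu^\star$.

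The main obstacle I expect is not the contraction argument but making the a.s.\ order statements rigorous. This means verifying transitivity of $\preceq_{\mathrm{st}}$ via gluing; the glued measure exists because $\calX$ is Polish and the disintegration theorem applies. It also means ensuring the exceptional null sets depend only on countably many indices. Finally, well-definedness modulo null sets must be checked, and this is fine because both $\mathbf W_{p,q}$ and $\preceq_{\mathrm{a.s.}}$ are insensitive to modifications on null sets.
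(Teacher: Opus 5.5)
Your proposal is correct and follows the paper's own argument: Banach's contraction principle on the complete space $(\calS_{p,q},\mathbf W_{p,q})$, induction for the monotone chain, and closedness of $\preceq_{\mathrm{a.s.}}$ under $\mathbf W_{p,q}$ convergence for the final comparison with $\mu^\star$. The one step you make explicit that the paper leaves implicit is transitivity of $\preceq_{\mathrm{st}}$ via gluing, which is genuinely needed to obtain $T^n\mu_0\preceq_{\mathrm{a.s.}}T^m\mu_0$ for all $m\ge n$ before passing to the limit.
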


The fixed point theorem uses the complete ambient metric.  The ordered part supplies monotonicity of the iterates and admissibility information.  This is another instance of the separation between convergence and admissibility.

\begin{proof}
Banach's contraction theorem applies on the complete metric space $(\calS_{p,q},\mathbf W_{p,q})$.  If $T$ is order-preserving, the ordered chain follows by induction.  Passing to the $\mathbf W_{p,q}$ limit and using closedness of a.s. order gives the final comparison with $\mu^\star$.
\end{proof}

\begin{theorem}[Order-risk leakage recursion]
\label{thm:leakage-recursion}
Let $T:\calS_{p,q}\to\calS_{p,q}$ and suppose that along the iterates $\mu_{n+1}=T\mu_n$ the local leakage satisfies
\[
      L_T(\mu_{n+1})\le a L_T(\mu_n)+\zeta,
      \qquad 0\le a<1,
\]
where $\zeta\in L^0_+(\calH)$ is fixed and independent of $n$.  Then
\[
      L_T(\mu_n)
      \le a^n L_T(\mu_0)+\frac{1-a^n}{1-a}\zeta,
\]
and therefore
\[
      \limsup_{n\to\infty}L_T(\mu_n)\le \frac{\zeta}{1-a}
\]
in the order of $L^0_+(\calH)$.
\end{theorem}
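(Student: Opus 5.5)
The plan is a discrete Gronwall argument carried out $\omega$-wise, i.e.\ in the lattice order of $L^0_+(\calH)$. Write $\ell_n=L_T(\mu_n)$. By Definition~\ref{def:conditional-ordered-risk}, $\ell_n=\rho_{\calH}(\cdot)^{1/p}$ takes values in $L^0_+(\calH)$ (possibly $+\infty$ on some sets; the convention $0\cdot\infty=0$ is not needed because $a^n\ell_0$ is then simply $+\infty$ there, and the bound is trivial). The hypothesis is an inequality between $\calH$-measurable random variables holding $\Prob$-a.s. for each $n$. Since there are countably many $n$, discard the union $N$ of the exceptional null sets and work pointwise on $\Omega\setminus N$.

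The first step is an induction on $n$. For $n=0$ the claim reads $\ell_0\le \ell_0+0$, which holds. Assuming $\ell_n\le a^n\ell_0+\frac{1-a^n}{1-a}\zeta$, multiply by $a\ge0$; this preserves the a.s.\ order. Then
\[
\ell_{n+1}\le a\ell_n+\zeta\le a^{n+1}\ell_0+\Big(\frac{a-a^{n+1}}{1-a}+1\Big)\zeta=a^{n+1}\ell_0+\frac{1-a^{n+1}}{1-a}\zeta .
\]
The only facts used are monotonicity of multiplication by nonnegative scalars and addition in $L^0_+$.

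The second step is to take $\limsup_{n\to\infty}$ pointwise on $\Omega\setminus N$. On $\{\ell_0<\infty\}$ we have $a^n\ell_0\to0$ because $0\le a<1$, and $\frac{1-a^n}{1-a}\zeta\uparrow\frac{\zeta}{1-a}$. This gives $\limsup_n \ell_n\le\zeta/(1-a)$ there.

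The one delicate point, and the main obstacle, is the set $\{\ell_0=+\infty\}$. There the recursion gives no information, so the $\limsup$ bound needs $\ell_0<\infty$ a.s. (or $\ell_1<\infty$, restarting the recursion from $n=1$). I would handle this in one of two ways. The first option is to note that under the standing use of $L^0_+(\calH)$ as real-valued random variables, $\ell_0$ is finite a.s. by assumption. The second option, if $\rho_{\calH}$ may take the value $+\infty$, is to state the limsup conclusion on $\{L_T(\mu_0)<\infty\}$. Finally, I would remark that the essential $\limsup$ of a sequence in $L^0(\calH)$ coincides a.s. with the pointwise $\limsup$. It is therefore $\calH$-measurable, and the inequality holds in the order of $L^0_+(\calH)$ as claimed.
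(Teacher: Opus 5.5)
Your proposal is correct and takes the same route as the paper, whose entire proof is ``iterate the inequality in the ordered lattice $L^0_+(\calH)$''; your induction, the union of countably many null sets, and the pointwise $\limsup$ spell that one-liner out in full. Your caveat that the $\limsup$ bound needs $L_T(\mu_0)<\infty$ a.s.\ (automatic if $L^0_+(\calH)$ means a.s.\ finite variables) is a legitimate refinement the paper leaves implicit. One small correction: when $a=0$, both the hypothesis and the term $a^nL_T(\mu_0)$ do require the convention $0\cdot\infty=0$ on $\{L_T(\mu_0)=+\infty\}$, contrary to your remark that it is not needed.
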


This theorem is elementary as a recursion, but conceptually central.  It isolates the mechanism by which persistent local order leakage becomes a global reliability floor.  The statement is separated from the Wasserstein fixed point theorem because convergence of distributions and admissibility of transitions are different properties.  The recursion is an admissibility-stability assumption on the learning dynamics, analogous to a contractivity assumption in metric fixed point theory.  It is not asserted to hold for every neural or generative operator; rather, the theorem identifies the long-run reliability consequence once local leakage is controlled.

\begin{proof}
Iterate the inequality in the ordered lattice $L^0_+(\calH)$.
\end{proof}

\paragraph{A minimal leakage channel.}
The hypothesis is not meant to be automatic for arbitrary learning systems, but it is not vacuous.  On the Dirac family over $\mathbb R$ with the usual order, take $p=1$ and the identity conditional risk.  Let $T\delta_x=\delta_{\alpha x}$ with $0<\alpha<1$ and $x\ge0$.  The transition moves mass backward relative to the increasing order, and the soft ordered leakage is
\[
      L_T(\delta_x)=\W^{\soft}_{1,\lambda,\le}(\delta_x,\delta_{\alpha x})=(1+\lambda)(1-\alpha)x .
\]
Along the iterates $x_n=\alpha^n x_0$, this gives $L_T(\delta_{x_{n+1}})=\alpha L_T(\delta_{x_n})$, so Theorem~\ref{thm:leakage-recursion} is attained with $a=\alpha$ and $\zeta=0$.  A bounded persistent defect in the backward update gives the same recursion with a nonzero residual term.  This toy channel is included only to illustrate the stability assumption; deriving the constants for a particular neural or diffusion dynamics is an algorithm-dependent problem.

\paragraph{Time-varying leakage.}
The fixed $\zeta$ assumption is used only to state the cleanest floor.  The same proof gives, from
\[
      L_T(\mu_{n+1})\le aL_T(\mu_n)+\zeta_n,
      \qquad \zeta_n\in L^0_+(\calH),
\]
the bound
\[
      L_T(\mu_n)
      \le a^nL_T(\mu_0)+\sum_{k=0}^{n-1}a^{n-1-k}\zeta_k.
\]
If $\zeta_k\le \bar\zeta$ for all $k$, this reduces to the floor $\bar\zeta/(1-a)$.  We use the fixed $\zeta$ form in the main statement to emphasize the persistent component of the local admissibility defect.

\begin{proposition}[Sharpness of the leakage floor]
\label{prop:sharpness}
The factor $\zeta/(1-a)$ in Theorem~\ref{thm:leakage-recursion} cannot be improved for general leakage processes satisfying the stated recursion.
\end{proposition}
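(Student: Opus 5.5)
The plan is to show sharpness by exhibiting, for every $a\in[0,1)$ and every $\zeta\in L^0_+(\calH)$, a leakage process that satisfies the recursion of Theorem~\ref{thm:leakage-recursion} with equality. Its $\limsup$ will then equal $\zeta/(1-a)$. Consequently, no constant $c<1/(1-a)$ can give a valid bound $\limsup_n L_T(\mu_n)\le c\,\zeta$ on the set $\{\zeta>0\}$.

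\textbf{Abstract extremal sequence.} Consider sequences $\ell_n\in L^0_+(\calH)$ that satisfy only $\ell_{n+1}\le a\ell_n+\zeta$, which is the class of ``general leakage processes'' in the statement. Take $\ell_0=0$ and $\ell_{n+1}=a\ell_n+\zeta$. Then
\[
\ell_n=\frac{1-a^n}{1-a}\zeta,
\]
which increases to $\zeta/(1-a)$ pointwise. Hence the $\limsup$ equals the floor exactly. The same holds with any initial value $\ell_0=\zeta/(1-a)$, which is the stationary solution $\ell_n\equiv\zeta/(1-a)$. In particular, even the finite-$n$ bound of the theorem is attained.

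\textbf{Realization by a CROTS operator.} To show the floor is achieved by genuine leakage $L_T(\mu_n)=\calR^{\preceq}_{\lambda,\rho}(\mu_n,T\mu_n)$, I would reuse the minimal leakage channel and add a constant backward defect. Work on Dirac masses over $\R$ with the usual order, $p=1$, and $\rho_{\calH}=\mathrm{id}$ (take $\calH=\calF$, or a deterministic $\zeta$ if only constants are wanted). By the Dirac case of Theorem~\ref{thm:reduction}(c),
\[
\W^{\soft}_{1,\lambda,\le}(\delta_x,\delta_y)=(1+\lambda)(x-y)\quad\text{for }y\le x,
\]
since there $\eta(x,y)=x-y$. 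Define the step so that the backward displacement $s_n=x_n-x_{n+1}$ obeys $s_{n+1}=a s_n+\zeta/(1+\lambda)$. For instance, use a state-dependent step $T$ on the pair (position, last step), or encode the step in the iterate index via $\omega$-wise parametrization. Then $L_n=(1+\lambda)s_n$ satisfies $L_{n+1}=aL_n+\zeta$ exactly.

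\textbf{Main obstacle.} The hardest part is making $T$ a single autonomous self-map of $\calS_{p,q}$ while the positions $x_n$ drift to $-\infty$. The drift risks leaving $\calS_{p,q}$, or in the stationary case giving an unbounded orbit that is still finite at each $n$. This is acceptable because each $\mu_n$ is a Dirac mass and so has finite moments. If a self-map is awkward, I would fall back on the abstract formulation above: Theorem~\ref{thm:leakage-recursion} uses only the recursion, so sharpness within ``general leakage processes satisfying the stated recursion'' follows from the explicit extremal sequence alone.
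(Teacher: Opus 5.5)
Your abstract extremal sequence $\ell_{n+1}=a\ell_n+\zeta$, which satisfies the recursion with equality and converges to $\zeta/(1-a)$, is exactly the paper's argument; the paper allows an arbitrary initial value $r_0\in L^0_+(\calH)$, and with equality the finite-$n$ bound is attained for every initial value, not only the stationary one, so your proof is correct and follows the paper. The CROTS-operator realization goes beyond what the paper attempts and is only loosely sketched: a Dirac mass on $\R$ does not store a ``last step'', so you would instead define $T$ directly on the injective orbit $\{x_n\}$. Being a Dirac mass also gives only the fiberwise $p$-moment, not the $L^q(\Omega)$ integrability needed for $\mu_n\in\calS_{p,q}$ when $\zeta$ is random. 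Neither point affects the sharpness claim.
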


\begin{proof}
Let $r_0\in L^0_+(\calH)$ and define $r_{n+1}=ar_n+\zeta$.  This sequence satisfies the hypothesis with equality and has limit $\zeta/(1-a)$ whenever $a\in[0,1)$.  Therefore any universal upper bound based only on $a$, $r_0$, and $\zeta$ must be at least $\zeta/(1-a)$ asymptotically.
\end{proof}

\begin{theorem}[Ambient convergence with order-risk control]
\label{thm:main}
Assume $(\calX,d)$ is complete.  Let $T:\calS_{p,q}\to\calS_{p,q}$ be a contraction in $\mathbf W_{p,q}$ with constant $\alpha<1$, and assume the local leakage recursion of Theorem~\ref{thm:leakage-recursion} holds along the iterates.  Then the iterates converge in ambient random Wasserstein distance to the unique fixed point of $T$, and the transition-level conditional order-risk obeys
\[
      \mathbf W_{p,q}(T^n\mu_0,\mu^\star)\to 0,
      \qquad
      \limsup_{n\to\infty}L_T(T^n\mu_0)
      \le \frac{\zeta}{1-a}.
\]
Thus convergence and admissibility are controlled by different mechanisms.  If $\zeta=0$, the dynamics become asymptotically admissible in conditional order-risk.
\end{theorem}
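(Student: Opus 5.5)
The statement combines two results proved earlier, one per conclusion, and I would keep the two arguments separate.

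\emph{Ambient convergence.} Since $(\calX,d)$ is complete, Theorem~\ref{thm:meas-complete} makes $(\calS_{p,q},\mathbf W_{p,q})$ a complete metric space. The contraction hypothesis is then exactly the hypothesis of Theorem~\ref{thm:random-fp}. That theorem gives a unique fixed point $\mu^\star$ together with the bound
\[
\mathbf W_{p,q}(T^n\mu_0,\mu^\star)\le\alpha^n\mathbf W_{p,q}(\mu_0,\mu^\star).
\]
The right-hand side is finite because $\mu_0,\mu^\star\in\calS_{p,q}$, and it tends to $0$ since $\alpha<1$.

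\emph{Order-risk control.} Write $\mu_n=T^n\mu_0$, so that $\mu_{n+1}=T\mu_n$ are exactly the iterates in Theorem~\ref{thm:leakage-recursion}. The recursion $L_T(\mu_{n+1})\le aL_T(\mu_n)+\zeta$ is assumed along these iterates, so that theorem yields
\[
L_T(\mu_n)\le a^nL_T(\mu_0)+\tfrac{1-a^n}{1-a}\zeta .
\]
To take the $\limsup$ in the lattice $L^0_+(\calH)$, I would argue $\omega$-wise on the event $\{L_T(\mu_0)<\infty\}$. There $a^nL_T(\mu_0)\to0$ and $\tfrac{1-a^n}{1-a}\zeta\le\zeta/(1-a)$, which gives the floor. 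On the event $\{L_T(\mu_0)=\infty\}$ the bound is vacuous. I would therefore either note that finiteness holds because the soft cost is dominated by $(1+\lambda)W_p^p$ and $\rho_\calH$ maps into $L^0_+(\calH)$, or simply interpret the statement with the extended-value convention. For $\zeta=0$ the same bound gives $\limsup_n L_T(\mu_n)=0$, which is asymptotic admissibility in conditional order-risk.

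The only genuinely delicate point is this finiteness and interpretation of $L_T(\mu_0)$ as an element of $L^0_+(\calH)$, so that the pointwise $\limsup$ makes sense. Everything else is a direct citation of Theorems~\ref{thm:random-fp} and \ref{thm:leakage-recursion}.

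Finally, I would remark that the two mechanisms are independent. The contraction constant $\alpha$ plays no role in the risk bound, and the leakage constants $(a,\zeta)$ play no role in the convergence bound. This is the separation the theorem asserts.
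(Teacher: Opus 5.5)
Your proposal is correct and matches the paper's proof, which simply cites Theorem~\ref{thm:random-fp} for the ambient convergence and applies Theorem~\ref{thm:leakage-recursion} to $\mu_n=T^n\mu_0$ for the leakage floor. Your extra care about the finiteness of $L_T(\mu_0)$ and the $\omega$-wise evaluation of the $\limsup$ in $L^0_+(\calH)$ makes explicit a point the paper leaves implicit. That finiteness follows from $c_\lambda\le(1+\lambda)d^p$, or from the codomain $L^0_+(\calH)$ of $\rho_{\calH}$.
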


Theorem~\ref{thm:main} is the main stability statement.  It does not claim that soft ordered distance to the fixed point has a nonzero floor; finite soft transport to the fixed point vanishes under ambient convergence.  It claims that the local admissibility of the transitions themselves is governed by a separate order-risk recursion.  This is precisely the phenomenon relevant to reliable learning: a system can converge as a distributional process while still making locally inadmissible moves unless the leakage is controlled.

\begin{proof}
The ambient convergence is Theorem~\ref{thm:random-fp}.  The leakage bound is Theorem~\ref{thm:leakage-recursion} applied to the sequence $\mu_n=T^n\mu_0$.
\end{proof}

\begin{theorem}[Perturbation of an ideal admissible dynamics]
\label{thm:perturbation}
Let $S,T:\calS_{p,q}\to\calS_{p,q}$ be contractions with constants at most $\alpha<1$, with fixed points $\mu_S^\star$ and $\mu_T^\star$.  If
\[
\sup_{\mu\in\calS_{p,q}}\mathbf W_{p,q}(S\mu,T\mu)\le\delta,
\]
then
\[
\mathbf W_{p,q}(\mu_S^\star,\mu_T^\star)\le \frac{\delta}{1-\alpha}.
\]
If $S$ is an ideal admissible operator and $T$ has local order leakage bounded by the recursion above, then ordinary approximation error and conditional order leakage decompose into the two bounds given by this theorem and Theorem~\ref{thm:main}.
\end{theorem}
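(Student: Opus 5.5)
The plan is the standard fixed-point perturbation argument, carried out in the complete metric space $(\calS_{p,q},\mathbf W_{p,q})$. Completeness is supplied by Theorem~\ref{thm:meas-complete}, and Theorem~\ref{thm:random-fp} supplies the existence and uniqueness of $\mu_S^\star$ and $\mu_T^\star$. The only facts needed are:
\begin{itemize}
\item the fixed-point identities $S\mu_S^\star=\mu_S^\star$ and $T\mu_T^\star=\mu_T^\star$;
\item the triangle inequality for the genuine metric $\mathbf W_{p,q}$;
\item the uniform closeness hypothesis $\sup_\mu\mathbf W_{p,q}(S\mu,T\mu)\le\delta$.
\end{itemize}

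First, write
\[
\mathbf W_{p,q}(\mu_S^\star,\mu_T^\star)=\mathbf W_{p,q}(S\mu_S^\star,T\mu_T^\star)
\le \mathbf W_{p,q}(S\mu_S^\star,S\mu_T^\star)+\mathbf W_{p,q}(S\mu_T^\star,T\mu_T^\star).
\]
Bound the first term by $\alpha\,\mathbf W_{p,q}(\mu_S^\star,\mu_T^\star)$ using the contraction property of $S$. Bound the second term by $\delta$, applying the hypothesis at the single point $\mu=\mu_T^\star$. Since $\mathbf W_{p,q}(\mu_S^\star,\mu_T^\star)$ is finite (both points lie in $\calS_{p,q}$) and $\alpha<1$, rearranging gives $(1-\alpha)\mathbf W_{p,q}(\mu_S^\star,\mu_T^\star)\le\delta$, which is the claimed bound. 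Note that only the contractivity of $S$ is used here. The symmetric splitting through $T\mu_S^\star$ would work equally well and would use the contractivity of $T$ instead.

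For the decomposition statement, I would make precise what ``decompose'' means and then combine two existing bounds. Take iterates $\mu_n=T^n\mu_0$. The triangle inequality and Theorem~\ref{thm:random-fp} give the ambient error
\[
\mathbf W_{p,q}(\mu_n,\mu_S^\star)\le \alpha^n\mathbf W_{p,q}(\mu_0,\mu_T^\star)+\frac{\delta}{1-\alpha},
\]
so $\limsup_n \mathbf W_{p,q}(\mu_n,\mu_S^\star)\le\delta/(1-\alpha)$. Separately, Theorem~\ref{thm:main} gives the admissibility part $\limsup_n L_T(\mu_n)\le\zeta/(1-a)$ in $L^0_+(\calH)$. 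The decomposition is then the pair of bounds: the scalar approximation error is governed by $(\alpha,\delta)$, and the conditional order leakage is governed by $(a,\zeta)$. Neither bound is used to derive the other.

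I expect no serious analytic obstacle. The only delicate point is interpretive: the second sentence of the statement is qualitative, so the proof must fix it as the conjunction of the two displayed limsup bounds, rather than as a single combined inequality. In particular, the proof should not claim that $L_T$ is controlled by $\delta$. Such a claim would require continuity of the soft ordered risk in $\mathbf W_{p,q}$, which is not available for general $\rho_{\calH}$. I would therefore state explicitly that the leakage floor comes solely from the recursion hypothesis.
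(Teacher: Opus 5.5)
Your argument is correct and is the paper's proof: fixed-point identities, a triangle inequality through $S\mu_T^\star$ (which the paper leaves implicit), contractivity of $S$, the uniform $\delta$ bound, and rearrangement using finiteness of $\mathbf W_{p,q}(\mu_S^\star,\mu_T^\star)$. Your reading of the decomposition clause as the pair of bounds $\limsup_n\mathbf W_{p,q}(T^n\mu_0,\mu_S^\star)\le\delta/(1-\alpha)$ and the leakage floor of Theorem~\ref{thm:main} is a faithful and more explicit version of what the paper asserts without proof.
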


This perturbation theorem explains how to compare an ideal admissible system with a learned one.  Wasserstein approximation error moves the fixed point; order leakage limits the reliability of transitions.  Treating these two effects separately is one of the main conceptual advantages of CROTS.

\begin{proof}
Use the fixed point equations:
\[
\mathbf W_{p,q}(\mu_S^\star,\mu_T^\star)
=\mathbf W_{p,q}(S\mu_S^\star,T\mu_T^\star)
\le \delta+\alpha \mathbf W_{p,q}(\mu_S^\star,\mu_T^\star).
\]
Rearrange.
\end{proof}

\section{Consequences for Reliable Learning}
\label{sec:consequences}

This section records consequences that connect the abstract theory to reliable learning without turning the paper into an algorithm or experiment paper.  The aim is to show that ordered transport risk captures failure modes invisible to ordinary distributional distance.

A simple example is a medical state vector $x=(s,r)\in\R^2$, where $s$ denotes lesion burden and $r$ denotes a risk score.  If the evidence specifies a progression cone $K=\R^2_+$, then an update from $x$ to $y$ is admissible only when $y-x\in K$.  A model may make a very small Wasserstein move while still moving mass outside this cone.  CROTS records this as order violation rather than as ordinary distributional error.

\paragraph{Evidence overreach.}
Let $E$ be an evidence random variable and let $\calH=\sigma(E)$.  Suppose $\mu_E$ is an evidence-induced distribution of admissible outputs and $\nu$ is a candidate output distribution.  Evidence overreach occurs when transport from evidence to output requires order violation.  The conditional evidence-overreach risk can be written as
\[
      R_{\mathrm{ev}}(\nu\mid E)=\rho_{\calH}\big(V_{\preceq}(\mu_E,\nu)\big).
\]
This definition distinguishes a distribution that is close but inadmissible from one that is far but directionally valid.

\paragraph{Separation of inadmissible outputs.}
Let $\calC\subset\calP(\calX)$ be a closed convex set of admissible distributions, for example a set defined by stochastic-order, moment, or evidence constraints.  If $\calX$ is compact and $\nu\notin\calC$, then Hahn-Banach separation gives a continuous witness $\varphi$ such that
\[
      \int \varphi\dd\nu > \sup_{\mu\in\calC}\int\varphi\dd\mu.
\]
Thus an inadmissible distribution can be detected by a witness functional.  In applications such a witness may be implemented as a verifier, critic, classifier, or consistency score; the theorem itself is geometric.

\paragraph{Ordered distribution shift.}
Let $\ell_f$ be a loss and $\calL_f(\mu)=\int \ell_f\dd\mu$.  If $\calL_f$ is $L$-Lipschitz with respect to the soft ordered cost on a class of distributions, then
\[
      \calL_f(P)\le \calL_f(\widehat P)+L\W^{\soft}_{p,\lambda,\preceq}(P,\widehat P).
\]
The right-hand side contains both ordinary displacement and order violation.  The direction of the soft ordered cost is part of the modeling statement: $\W^{\soft}_{p,\lambda,\preceq}(P,\widehat P)$ measures admissible movement from the target law toward the empirical law, whereas the reverse direction answers a different reliability question and may have a different value.  This suggests the concept of \emph{ordered distribution shift}: generalization failure caused by displacement in inadmissible directions, not merely by distributional distance.

\paragraph{Robustness as ordered contamination absorption.}
For a contamination family $P_\eps=(1-\eps)P+\eps Q$, define
\[
      r_{\rho,\preceq}(P;\tau)=\sup\{\eps\in[0,1]:\rho_{\calH}(\D^{0,\soft}_{p,\lambda,\preceq}(P_\eps,P)^p)^{1/p}\le \tau\}.
\]
A contamination may be large in ordinary distance but admissible under the task order, or small in distance but harmful because it crosses the order boundary.  CROTS therefore refines robustness by adding perturbation direction.

\section{Conclusion}
\label{sec:conclusion}

We introduced conditional random ordered transport spaces (CROTS), a new ambient space for reliable distributional learning in which probability laws are random, transport is constrained or penalized by an admissible order, and stability is evaluated through conditional risk.  The central message is that distributional closeness does not certify admissible transport: a transformation may be small in Wasserstein distance while still crossing an evidence, semantic, causal, physical, or risk boundary.  To formalize this distinction, we developed hard and soft ordered transport, proved duality and soft-to-hard variational convergence, lifted the construction to random probability measures through $L^0$-valued ordered discrepancies, and defined conditional order-risk functionals.  The resulting space separates the complete ambient metric used for convergence from the directed order-risk quantities used for admissibility.  We then established geometric and operator-theoretic foundations, including ordered geodesics, constrained barycenters, projections, random ordered fixed points, and order-risk stability.  The main stability principle shows that local conditional order-risk leakage induces an explicit long-run reliability floor, providing a mathematical language for evidence overreach, ordered distribution shift, robustness failure, and admissible distributional dynamics.

The theory developed here is intended as a foundation for a broader program of ordered transport learning.  Several natural directions emerge from this viewpoint.  Specific orders such as convex order, subharmonic order, cone order, causal order, and evidence-induced order may lead to sharper projection, barycenter, and duality theories, especially beyond compact spaces through moment coercivity or tightness conditions.  Continuous-time versions of CROTS may connect ordered transport to Fokker--Planck equations, probability-flow ODEs, diffusion models, and gradient-flow geometry.  Random or data-dependent orders would require measurable random order graphs and are a natural extension of the fixed-order setting studied here.  Statistical theory for conditional order-risk estimation should reveal how sample complexity depends not only on dimension, but also on order complexity and evidence structure; learning or estimating the conditional risk functional itself is another natural question.  In applications, admissible orders may be specified by domain knowledge or learned from medical constraints, causal graphs, prompts, physical laws, human feedback, or multimodal evidence.  Finally, the soft ordered transport objective suggests scalable algorithmic routes, including entropy-regularized solvers, order-aware critics, and projection-based correction mechanisms.  These directions suggest that CROTS is not merely a new discrepancy, but a starting point for a systematic theory of conditionally admissible distributional learning.

\appendix

\section{Proof Details for Deterministic Ordered Transport}
\label{app:deterministic}

\subsection{Lower semicontinuity of soft ordered values}
Let $c:\calX\times\calX\to[0,+\infty]$ be lower semicontinuous with $p$-growth from below.  If $\mu_n\to\mu$ and $\nu_n\to\nu$ in $W_p$, then
\[
\inf_{\pi\in\Pi(\mu,\nu)}\int c\dd\pi
\le
\liminf_{n\to\infty}
\inf_{\pi\in\Pi(\mu_n,\nu_n)}\int c\dd\pi.
\]
Indeed, choose nearly optimal $\pi_n\in\Pi(\mu_n,\nu_n)$.  The convergence of marginals in $W_p$ implies tightness of $(\pi_n)$ and uniform $p$-moment control.  Passing to a weak limit $\pi\in\Pi(\mu,\nu)$ and applying lower semicontinuity yields the claim.  This applies to $c_\lambda$ and to the extended hard cost.

\subsection{Closedness of ordered couplings}
If $\pi_n\in\Pi_{\preceq}(\mu_n,\nu_n)$ and $\pi_n\weakto\pi$, then the marginal maps give $\pi\in\Pi(\mu,\nu)$.  Since $G_{\preceq}$ is closed and $\pi_n(G_{\preceq})=1$, Portmanteau gives $\pi(G_{\preceq})=1$.  Thus $\pi\in\Pi_{\preceq}(\mu,\nu)$.

\subsection{Soft-to-hard convergence as Gamma convergence on couplings}
For fixed marginals, define functionals
\[
F_\lambda(\pi)=
\begin{cases}
\int d(x,y)^p\dd\pi+\lambda\int\eta_{\preceq}(x,y)^p\dd\pi,&\pi\in\Pi(\mu,\nu),\\
+\infty,&\text{otherwise.}
\end{cases}
\]
On the narrow topology of the fixed-marginal coupling space $\Pi(\mu,\nu)$, equivalently the weak topology inherited from $\calP(\calX\times\calX)$, the soft-to-hard result can be viewed as a Gamma-convergence statement: $F_\lambda$ Gamma-converges, in the monotone sense as $\lambda\to\infty$, to
\[
F_\infty(\pi)=
\begin{cases}
\int d(x,y)^p\dd\pi,&\pi\in\Pi_{\preceq}(\mu,\nu),\\
+\infty,&\text{otherwise.}
\end{cases}
\]
The liminf inequality follows from the Portmanteau theorem applied to the nonnegative lower semicontinuous costs and from the fact that bounded $F_\lambda$ forces violation to vanish.  The recovery sequence for an ordered plan is the constant sequence.  This is the fixed-marginal variational interpretation of Theorem~\ref{thm:soft-hard}; it does not assert a uniform Gamma-convergence statement over varying marginal pairs.  The main text uses only the value convergence and convergence of minimizers stated there.

\section{Proof Details for the Random Lift and Conditional Risk}
\label{app:random}

\subsection{Metric-valued $L^q$ completeness}
Let $(M,d_M)$ be complete and separable.  The space of measurable maps $Z:\Omega\to M$ with $(\E d_M(Z,z_0)^q)^{1/q}<\infty$ is complete under $d_q(Z,Y)=(\E d_M(Z,Y)^q)^{1/q}$.  The proof is the usual summable-subsequence argument.  Applying this to $M=(\calP_p(\calX),W_p)$ gives Theorem~\ref{thm:meas-complete}.

\subsection{Conditional risk examples}
The three main examples used in applications are understood on their natural finite domains:
\begin{enumerate}[leftmargin=2em]
\item Conditional expectation: $\rho_{\calH}(Z)=\E[Z\mid\calH]$ when $Z$ is integrable.
\item Conditional essential supremum: $\rho_{\calH}(Z)=\esssup(Z\mid\calH)$ for essentially bounded or extended nonnegative $Z$.
\item Conditional average value-at-risk:
\[
\rho_{\calH}(Z)=\essinf_{m\in L^0(\calH)}\left\{m+\frac{1}{1-\alpha}\E[(Z-m)_+\mid\calH]\right\},
\]
when the right-hand side is finite.
\end{enumerate}
These choices lead to different interpretations of reliable transport: average conditional admissibility, worst-case conditional admissibility, and tail conditional admissibility.

\subsection{Why directed risk is not a metric}
For finite $\lambda$, $\W^{\soft}_{p,\lambda,\preceq}$ is finite, but it is generally not symmetric.  For hard transport, it may be infinite in one direction and finite in the other.  Symmetrizing is possible but loses the directional information.  CROTS therefore keeps $\mathbf W_{p,q}$ as the complete metric and treats order-risk as a directed functional.

\section{Additional Learning-Theoretic Details}
\label{app:learning}

\subsection{Bounded empirical concentration}
Assume $\calX$ is compact with $\diam(\calX)\le D$.  Then $0\le c_\lambda(x,y)\le (1+\lambda)D^p=:M$.  Let $\widehat\mu_n$ and $\widehat\nu_m$ be empirical measures from independent samples.  For
\[
Z=C^{\soft}_{p,\lambda,\preceq}(\widehat\mu_n,\widehat\nu_m),
\]
replacing one sample changes $Z$ by at most $M/n$ or $M/m$.  McDiarmid's inequality gives
\[
\Prob\{|Z-\E Z|\ge t\}
\le 2\exp\left(-\frac{2t^2}{M^2(1/n+1/m)}\right).
\]
This elementary concentration statement is included only to indicate estimability of the empirical soft value; sharper statistical rates should depend on dimension and order complexity.

\subsection{A minimal ordered-shift counterexample}
The example in Proposition~\ref{prop:close-not-admissible} can be made two-dimensional with a cone.  Let $\calX=\R^2$, $K=\R_+^2$, $\mu_n=\delta_{(0,0)}$, and $\nu_n=\delta_{(-1/n,0)}$.  Then $W_p(\mu_n,\nu_n)\to0$, but no hard ordered coupling exists from $\mu_n$ to $\nu_n$.  Thus the failure is not an artifact of a one-dimensional order; it persists for cone orders in feature spaces.

\bibliography{references}

\end{document}